\def\eqref#1{equation~\ref{#1}}
\def\1{\bm{1}}
\def\vzero{{\bm{0}}}
\def\vtheta{{\bm{\theta}}}
\def\va{{\bm{a}}}
\def\vb{{\bm{b}}}
\def\vv{{\bm{v}}}
\def\vw{{\bm{w}}}
\def\vx{{\bm{x}}}
\def\valpha{{\bm{\alpha}}}
\def\vlambda{{\bm{\lambda}}}
\def\eva{{a}}
\def\evb{{b}}
\def\evw{{w}}
\def\mA{{\bm{A}}}
\def\mD{{\bm{D}}}
\def\mI{{\bm{I}}}
\def\mL{{\bm{L}}}
\def\mU{{\bm{U}}}
\def\mV{{\bm{V}}}
\def\mW{{\bm{W}}}
\def\mX{{\bm{X}}}
\def\mY{{\bm{Y}}}
\def\mLambda{{\bm{\Lambda}}}
\def\mLambda{{\bm{\Lambda}}}
\DeclareMathAlphabet{\mathsfit}{\encodingdefault}{\sfdefault}{m}{sl}
\SetMathAlphabet{\mathsfit}{bold}{\encodingdefault}{\sfdefault}{bx}{n}
\newcommand{\tens}[1]{\bm{\mathsfit{#1}}}
\def\tW{{\tens{W}}}
\def\tTheta{{\bm{\mathsf{\Theta}}}}
\def\gG{{\mathcal{G}}}
\def\gX{{\mathcal{X}}}
\def\emA{{A}}
\def\emU{{U}}
\def\emW{{W}}
\def\emX{{X}}
\def\emY{{Y}}
\newcommand{\etens}[1]{\mathsfit{#1}}
\def\etW{{\etens{W}}}
\newcommand{\R}{\mathbb{R}}
\newcommand{\tsym}{\text{sym}}
\newtheorem{example}{Example}
\newtheorem{theorem}{Theorem}
\newtheorem{definition}{Definition}
\newtheorem{proposition}{Proposition}
\newtheorem{lemma}[theorem]{Lemma}
\theoremstyle{remark}
\newtheorem{proposition*}[theorem]{Proposition}
\title{What Can We Learn From MIMO Graph Convolutions?}
\author{
Andreas Roth$^1$
\and
Thomas Liebig$^{1,2}$\\
\affiliations
$^1$ Artificial Intelligence, TU Dortmund University\\
$^2$ Lamarr Insitute for ML and AI, TU Dortmund University\\
\emails
\{andreas.roth, thomas.liebig\}@tu-dortmund.de
}
\begin{document}

\maketitle

\begin{abstract}
Most graph neural networks (GNNs) utilize approximations of the general graph convolution derived in the graph Fourier domain. While GNNs are typically applied in the multi-input multi-output (MIMO) case, the approximations are performed in the single-input single-output (SISO) case. In this work, we first derive the MIMO graph convolution through the convolution theorem and approximate it directly in the MIMO case. We find the key MIMO-specific property of the graph convolution to be operating on multiple computational graphs, or equivalently, applying distinct feature transformations for each pair of nodes. As a localized approximation, we introduce localized MIMO graph convolutions (LMGCs), which generalize many linear message-passing neural networks. For almost every choice of edge weights, we prove that LMGCs with a single computational graph are injective on multisets, and the resulting representations are linearly independent when more than one computational graph is used. Our experimental results confirm that an LMGC can combine the benefits of various methods.
    %multiple computational graphs or equivalently utilize edge specific feature transformations.
\end{abstract}

\section{Introduction}
%Graph-structured data is key for many challenging applications, e.g., property prediction for molecules~\cite{hu2021ogblsc}, fraud detection in transactions~\cite{weber2019antimoney}, and recommendations~\cite{monti2019fake}. Developing expressive and well-performing methods to learn from such data is thus an important challenge.

Graph neural networks have emerged as an effective method for many challenging applications involving graph-structured data, e.g., molecular prediction~\cite{hu2021ogblsc}. These utilize convolutional operations typically derived from the general graph convolution obtained in the Fourier domain, as given by the convolution theorem~\cite{hammond2011wavelets,bruna2014spectral}. Initially, approximations of the general graph convolution were based on polynomials, e.g., Chebyshev polynomials~\cite{hammond2011wavelets}. The graph convolutional network (GCN)~\cite{kipf2017semisupervised} approximates these polynomials as a first-order localization. Many other message-passing approaches are derived from the GCN~\cite{veličković2018graph,bo2021beyond}. However, these approximations are based on the single-output single-input (SISO) case, where the input and output contain a single feature for each node. GNNs are typically applied in the multi-input multi-output (MIMO) case, where each node has multiple feature channels assigned, and the output also contains multiple features. Extending from the SISO to the MIMO case is achieved by applying these methods for each input and output channel combination and learning distinct parameters~\cite{bruna2014spectral,defferrard2016convolutional,kipf2017semisupervised}. 

Instead of first approximating the graph convolution in the SISO and then extending to the MIMO, we propose directly performing the approximation in the MIMO case to benefit from MIMO-specific properties.
We first derive the general graph convolution in the MIMO case through the convolution theorem and the graph Fourier transform. We find the key property that allows the MIMO-GC to represent arbitrary transformations to be operating on multiple computational graphs or, equivalently, applying distinct linear feature transformations between each pair of nodes.
This form allows a direct approximation in the MIMO case by localizing the aggregation step. The resulting localized MIMO-GC (LMGC) presents a general framework for linear message-passing neural networks (MPNN) that inherits the beneficial properties for multi-channel learning.
While we show that the LMGC can represent most MPNNs, the LMGC cannot represent the graph isomorphism network (GIN)~\cite{xu2018how} due to its non-linear feature transformation. However, we show that LMGCs are injective on multisets for almost every choice of edge weights even for a single computational graph. When further utilizing multiple computational graphs as motivated by the MIMO-GC, we prove that representations are linearly independent for almost every choice of edge weights.
Our experiments confirm the effectiveness of LMGCs for graph-level and node-level tasks.
We summarize our main contributions as follows:

\begin{figure*}[t]
% Define the rectangle macro
\newcommand{\rectangle}[1]{%
  \begin{tikzpicture}[baseline=-0.5ex]
    \node[minimum size=0.6em, draw=#1, fill=#1, inner sep=0pt] {};
  \end{tikzpicture}%
}
\definecolor{vibrantpink}{HTML}{fb9a99} %Contributions
\definecolor{vibrantsand}{HTML}{ffff99} %Kreise

\centering
\begin{tikzpicture}[x=\textwidth/4, y=\textwidth/4]
\definecolor{vibrantblue}{HTML}{a6cee3} % Boxes
\definecolor{vibrantgreen}{HTML}{b2df8a} %Circles
\definecolor{vibrantpink}{HTML}{fb9a99} %Contributions
\definecolor{vibrantorange}{HTML}{fdbf6f} %MIMO
\definecolor{vibrantpurple}{HTML}{cab2d6} %SISO
\definecolor{vibrantsand}{HTML}{ffff99} %Kreise
%\definecolor{vibrantmagenta}{HTML}{EE3377}
%\definecolor{vibrantgray}{HTML}{BBBBBB}
\usetikzlibrary{shapes}
\pgfdeclarelayer{background}
\pgfsetlayers{background,main}
\tikzset{
dot/.style = {circle, minimum size=#1,
              inner sep=0pt, outer sep=0pt, draw=black},
dot/.default =13pt  % size of the circle diameter 
}
    % Define the colors and styles
    \tikzstyle{box} = [draw, draw=vibrantgreen, thick,rounded corners,line width=1.8pt]

    \def\ysiso{0.7}
    \def\ymimo{-0.05}
    \def\yours{-0.75}
    \def\xgc{-1.2}
    \def\xmp{1.50}
    \def\xpoly{0.15}
    \def\ymargin{0.28}
    
    % SISO + MIMO boxes
    \draw[box,draw=vibrantorange] (\xgc-0.8, \ysiso-\ymargin) rectangle (\xmp+0.5, \ysiso+\ymargin); 
    \node at (-1.80, \ysiso) {SISO}; 
        
    \draw[box,draw=vibrantorange] (\xgc-0.8, \yours-\ymargin) rectangle (\xmp+0.5, \ymimo + \ymargin);
    \node at (-1.80, \ymimo-0.35) {MIMO};
    \node[scale=1.0] at (\xgc,\ymimo -0.35) {$=$}; 

    \draw[box] (\xgc -0.45, \yours-\ymargin-0.05) rectangle (\xgc + 0.45, \ysiso+\ymargin+0.2);
    \node at (\xgc, \ysiso+\ymargin+0.1) {Graph convolution};
    % Message-Passing
    \draw[box] (\xmp - 0.45, \yours-\ymargin-0.05) rectangle (\xmp + 0.45,\ysiso+\ymargin+0.2); 
    \node at (\xmp, \ysiso+\ymargin+0.1) {Message-Passing};
    % Polynomials
    \draw[box] (\xpoly -0.45, \yours-\ymargin-0.05) rectangle (\xpoly + 0.45,\ysiso+\ymargin+0.2);
    \node at (\xpoly, \ysiso+\ymargin+0.1) {Polynomials};
    
    % Graph Convolution
    % SCGC
    \node[fill=vibrantsand,draw=vibrantsand,minimum width=3.5cm,minimum height=2cm,draw,ellipse] (scgc) at (\xgc,\ysiso) {};
    \node[scale=0.7] at (\xgc, \ysiso - 0.1) {(\autoref{eq:siso_gc})}; 
    \node[scale=0.7] at (\xgc, \ysiso) {$\vtheta*\vx=\mU \mathrm{diag}(\vw)\mU^T\vx$};

    \node[fill=vibrantsand,draw=vibrantsand,minimum width=3.5cm,minimum height=2cm,draw,ellipse] (mcgc) at (\xgc,\ymimo) {};
    \node[scale=0.7] at (\xgc, \ymimo) {$\mX^\prime_{:,q} = \sum_{p=1}^d\vtheta_{(p,q)}*\mX_{:,p}$};
    \node[scale=0.7] at (\xgc, \ymimo - 0.1) {(\autoref{eq:siso_gc_mimo})}; 

    % MCGC
    \node[fill=vibrantpink,draw=vibrantpink,minimum width=3.5cm,minimum height=2cm,draw,ellipse] (mimogc) at (\xgc,\yours) {};
    \node[scale=0.7] at (\xgc, \yours - 0.1) {MIMO-GC (\autoref{eq:mcgc})}; 
    \node[scale=0.7] at (\xgc, \yours) {$\tTheta*\mX = \sum_{k\in[n]} \mA^{(k)}\mX\mW^{(k)}$};

    % SC MPNNs
    \node[fill=vibrantsand,draw=vibrantsand,minimum width=3.5cm,minimum height=2cm,draw,ellipse] (scmpnn) at (\xmp,\ysiso) {};    
    \node[scale=0.7] at (\xmp, \ysiso) {$\vx^\prime = w \mA_\tsym\vx$}; 
    \node[scale=0.7] at (\xmp, \ysiso - 0.1) {(\autoref{eq:siso_mp})}; 
    % MC MPNNs
    \coordinate (x) at (\xmp,\ymimo);
        \node[fill=vibrantsand,draw=vibrantsand,minimum width=3.5cm,minimum height=2cm,draw,ellipse] (a) at (\xmp,\ymimo) {}; 
    %\node[scale=0.7] at ($ (x) + (0:0.0) $) {$\mX^\prime_{:,j} = \sum_{i=1}^d w_{(i,j)}\mA\mX_{:,i}$};
    \node[scale=0.7] at ($ (x) + (0:0.0) $) {$\mX^\prime = \mA_\tsym\mX\mW$};
        \node[scale=0.7] at (\xmp, \ymimo - 0.1) {(\autoref{eq:gcn})}; 
    %LMGC
    \node[fill=vibrantpink,draw=vibrantpink,minimum width=3.5cm,draw,ellipse,minimum height=2cm] (lmcgc) at (\xmp,\yours) {};
    \node[scale=0.7] at (\xmp,\yours) {$\mX^\prime = \sum_{k\in [K]} \tilde{\mA}^{(k)}\mX\mW^{(k)}$}; 
    \node[scale=0.7] at (\xmp,\yours - 0.1) {LMGC (\autoref{eq:lmgc_matrix})}; 
	%\node[dot, fill=vibrantsand,draw=vibrantsand,scale=0.7] (b) at ($ (x) + (210:0.3) $) {GAT};
	%\node[dot, fill=vibrantsand,draw=vibrantsand,scale=0.7] (c) at ($ (x) + (150:0.3) $) {SAGE};
	%\node[dot, fill=vibrantsand,draw=vibrantsand,scale=0.7] (f) at ($ (x) + (0:0.3) $) {GatedGCN};

	%\draw[->,line width=1.2pt] (a) -> (b);
    %\draw[->,line width=1.2pt] (a) -> (c);
    %\draw[->,line width=1.2pt] (a) -> (f);

    % SC Polynomials
    \node[fill=vibrantsand,draw=vibrantsand,minimum width=3.5cm,minimum height=2cm,draw,ellipse] (scpoly) at (\xpoly,\ysiso) {};
    \node[scale=0.7]  at (\xpoly, \ysiso) {$\vx^\prime = \sum_{k=0}^K w_k \mA_\tsym^k\vx$}; 
    \node[scale=0.7] at (\xpoly, \ysiso - 0.1) {(\autoref{eq:siso_poly})}; 

    % MC Polynomials
    \node[fill=vibrantsand,draw=vibrantsand,minimum width=3.5cm,minimum height=2cm,draw,ellipse] (mcpoly) at (\xpoly,\ymimo) {};
    \node[scale=0.7] at (\xpoly, \ymimo) {$\mX^\prime = \sum_{k=0}^K \mA_\tsym^k\mX\mW^{(k)}$};   
    \node[scale=0.7] at (\xpoly, \ymimo - 0.1) {(\autoref{eq:mimo_poly})};

    % Arrows
    \draw[->,line width=1.2pt] (scpoly) to node[sloped,above, color=black,scale=0.7] {\cite{kipf2017semisupervised}} (scmpnn);
    \draw[->,line width=1.2pt] (scmpnn) to node[color=black,scale=0.7] {\cite{kipf2017semisupervised}} (a);

    %\draw[->] (scmpnn) -- (mcmpnn)
    \draw[black,->,line width=1.2pt] (scgc) to node[color=black,scale=0.7] {\cite{bruna2014spectral}} (mcgc);

    \draw[->,line width=1.2pt] (scgc) to node[sloped,above, color=black,scale=0.7] {\cite{hammond2011wavelets}} (scpoly);
    
    \draw[->,line width=1.2pt] (scpoly) to node[color=black,scale=0.7] {\cite{defferrard2016convolutional}} (mcpoly);
    
    \draw[vibrantpink,->,line width=1.8pt] (mimogc) to node[xshift=-2.9cm,above, color=black,scale=0.7] {\autoref{def:lmcgc}}  (lmcgc);
    \draw[vibrantpink,->,line width=1.8pt] (mimogc)  to node[sloped,above, color=black,scale=0.7] {\autoref{prop:polynomials}} (mcpoly);
    \draw[vibrantpink,->,line width=1.8pt] (lmcgc)  to node[color=black,scale=0.7] {\autoref{ex:lmgc_gcn}} (a);

\end{tikzpicture}
\caption{Connections between the graph convolution, polynomial filters, and message-passing approaches in the SISO and the MIMO case. Parts in yellow (\protect\rectangle{vibrantsand}) indicate existing contributions, parts in pink (\protect\rectangle{vibrantpink}) our contributions.}
\label{fig:overview}
\end{figure*}

\begin{itemize}
    \item Based on the convolution theorem, we derive the MIMO graph convolution (MIMO-GC) for node representations with multiple feature channels. A key property of MIMO-GCs is to operate on multiple computational graphs, or equivalently, to apply distinct linear feature transformations for each pair of nodes (Section~\ref{sec:mcgc}).
    \item We introduce the framework of localized MIMO-GCs (LMGCs) by localizing the aggregation step of the MIMO-GC. It merges the key idea of operating on multiple computational graphs with the efficient message-passing scheme (Section~\ref{sec:mc-mpnns}). 
    \item We prove that LMGCs are injective on multisets for a single computational graph and produce linearly independent representations when more than one computational graph is used for almost every choice of edge weights (Section~\ref{sec:mc-mpnns}).
    %\item Our experimental results show that LMGCs can match and surpass GIN on graph-level tasks and attention-based methods on node-level tasks (Section~\ref{sec:experiments}).
\end{itemize}
%\begin{figure*}
%\definecolor{vibrantpink}{HTML}{fb9a99} %Contributions
%\centering
%\input{sections/schema_figure}
%\caption{Overview of the graph convolution and its approximation. Parts in \textcolor{vibrantpink}{Pink} indicate our contributions.}
%\label{fig:overview}
%\end{figure*}

\section{Preliminaries}
Let $\gG=(\mathcal{V},\mathcal{E})$ be a connected and undirected graph consisting of a set of $n$ nodes $\mathcal{V}$ and a set of edges $\mathcal{E}$. Let $\mA\in\{0,1\}^{n\times n}$ be the corresponding adjacency matrix with $\emA_{i,j}=1$ if $(i,j)\in\mathcal{E}$ and $0$ otherwise. The diagonal degree matrix is $\mD\in\mathbb{N}^{n\times n}$. The symmetrically normalized adjacency matrix is given by $\mA_\tsym = \mD^{-1/2}\mA\mD^{-1/2}$ and the graph Laplacian by $\mL_\tsym = \mI_n - \mA_\tsym$. Its eigendecomposition is $\mL = \mU\mLambda\mU^T$ where $\mLambda\in\R^{n\times n}$ is a diagonal matrix containing its eigenvalues, and $\mU\in\R^{n\times n}$ is an orthonormal matrix containing the corresponding eigenvectors as columns. 
We refer to a vector $\vx\in\R^n$ as a single-channel graph signal and to a matrix $\mX\in\R^{n\times d}$ as a multi-channel graph signal. These can be initial features or expressive and informative node embeddings.
In the graph domain, the Fourier base is given by the eigenvectors $\mU^T$ of the graph Laplacian. Thus, the Fourier transformation $F = \mU^T$ is performed by projecting a graph signal onto the eigenvectors, and its inverse transformation is given by $F^{-1} = \mU$.
 We further refer to $\mU_{i,:}\vx\in\R$ as the component of $\vx$ corresponding to vector $\mU_{i,:}$.
%when a function maps from single-channel to single-channel signal, we say its single-in single-out (SISO), when a function allows multi-channel signals as input and output, we refer to the case as multi-in multi-out (MIMO).

\subsection{Graph Convolutions}
\label{sec:single-channel}
Given a graph signal, the graph convolution or a similar method derived from it are designed to obtain a more informative graph signal. In the SISO case, the input and output are single-channel graph signals, while in the MIMO case, they are multi-channel graph signals. Graph neural networks (GNNs) are typically constructed by interleaving these operations with non-linear activation functions. The following derivations and approximations of the graph convolution are visualized in \cref{fig:overview}.

The general graph convolution is defined in the SISO case through the convolution theorem using the graph Fourier transform as
\begin{equation}
\label{eq:siso_gc}
\begin{split}
    \vtheta*\vx
    %&= F^{-1}(F(\vtheta) \odot F(\vx)) \\
    %&= \mU(\mU^T\vtheta \odot \mU^T\vx) \\
    &= \mU\textrm{diag}(\vw)\mU^T\vx
\end{split}
\end{equation}
where $\vw = \mU^T\vtheta\in\R^n$~\cite{hammond2011wavelets,bruna2014spectral}. As $\vx$ and $\vtheta * \vx$ are single-channel signals, we will refer to this as the SISO graph convolution (SISO-GC). 

Due to the runtime and memory complexity and inability to apply the same graph Fourier transform across graphs, most GNNs utilize approximations of the SISO-GC. Polynomials in $\mA_\tsym$ (or equivalently $\mL_\tsym$) provide a $K$-localized approximation
\begin{equation}
\label{eq:siso_poly}
    \vtheta * \vx \approx \sum_{k=0}^K w_{(k)}\mA_\tsym^k\vx 
\end{equation}
of the SISO-GC where $w_k\in\R$ are scalars for $k\in[K]$~\cite{hammond2011wavelets}. Examples of such approximations are Chebyshev~\cite{hammond2011wavelets} and Cayley polynomials~\cite{levis2019cayleynets}. 

Similarly, the graph convolutional network (GCN)~\cite{kipf2017semisupervised} was derived as a first-order localization
\begin{equation}
\label{eq:siso_mp}
    \vtheta * \vx \approx w\mA_\tsym\vx
\end{equation}
of SISO polynomials using a single parameter $w\in\R$. 

The graph convolution has not yet been derived for the MIMO case. Instead, following Bruna et al.~\shortcite{bruna2014spectral}, the graph convolution and the described approximations are extended to the MIMO case by applying it to each combination of input channel $p\in[d]$ and output channel $q\in[c]$. For the graph convolution, the output
\begin{equation}
\label{eq:siso_gc_mimo}
    \mX_{:,q}^\prime = \sum_{p=1}^d\vtheta_{(p,q)} * \mX_{:,p}
\end{equation}
is obtained by defining distinct filters $\theta_{(p,q)}\in\R^n$.

SISO polynomials are equivalently extended to the MIMO case by applying distinct parameters $\emW^{(k)}_{p,q}\in\R$ for each combination of input and output channels~\cite{defferrard2016convolutional}. Based on Equation~\ref{eq:siso_poly}, we have
\begin{equation}
\label{eq:mimo_poly}
    \mX^\prime_{:,q} = \sum_{p=1}^d\sum_{k=0}^K \emW^{(k)}_{p,q}\mA_\tsym^k\mX_{:,p}
\end{equation}
where $\mW^{(k)}\in\R^{d\times c}$.

Equivalently, the GCN is applied in the MIMO case using distinct parameters $\emW_{p,q}\in\R$ for each combination of input channel $p$ and output channel $q$~\cite{kipf2017semisupervised}. This led to the typical form of
\begin{equation}
\label{eq:gcn}
\mX^\prime_{:,q} = \sum_{p=1}^d \emW_{p,q}\mA_\tsym\mX_{:,p} = [\mA_\tsym\mX\mW]_{:,q}\, .
\end{equation}
%The amplification of components by $\mA_\tsym$ in \autoref{eq:siso_mp} is only scaled for any $w\in\R$ but not altered otherwise. Thus, the GCN amplifies the components in the same way across all combinations of input and output feature channels. We refer to this as shared component amplification. This leads to the commonly observed phenomenon of over-smoothing~\cite{Oono2020Graph} or, more generally, rank collapse when considering any matrix $\tilde{\mA}$ instead of $\mA_\tsym$~\cite{roth2023rank}. We provide additional details on this in the supplementary material.
Most other message-passing methods were then further derived from the GCN. 
In this work, we show the advantages of directly obtaining the graph convolution and approximations in the MIMO case.
\section{MIMO Graph Convolution}
\label{sec:mcgc}
We now consider the MIMO case. Let $\mX\in\mathbb{R}^{n\times d}$ be a multi-channel graph signal with $d$ channels for each node. The multi-channel output signal $\mY\in\mathbb{R}^{n\times c}$ can have a different number of channels $c$. We first derive the general graph convolution for the MIMO through the convolution theorem~\cite{neil1963convolution} and the graph Fourier transform. The filter $\tTheta\in\mathbb{R}^{n\times c \times d}$ contains the necessary element-wise mappings from $d$ to $c$ dimensions. To the best of our knowledge, this has not yet been derived.

%Importantly, the convolution theorem does not require additional definitions from us as it only requires the definition of the graph Fourier transform, which is a standard operation:

\begin{theorem}[MIMO Graph Convolution (MIMO-GC)]
\label{thrm:mcgc}
Let $\mX\in\mathbb{R}^{n\times d}$, $\tTheta\in\mathbb{R}^{n\times c \times d}$, and the Fourier transform $F = \mU^T\in\mathbb{R}^{n\times n}$ be given by the eigenvectors of the graph Laplacian $\mLambda$. Then, their convolution is given as
\begin{subequations}
    \begin{align}
    (\tTheta * \mX)(i) &= \left[\sum_{k=1}^{n} \mA^{(k)}\mX\mW^{(k)}\right]_{i,:} \label{eq:mcgc} \\
    &= \sum_{j=1}^{n} \mW_{(i,j)} \mX_{j,:} \in \mathbb{R}^{c} \label{eq:mcgc_node}
    \end{align}
    \end{subequations}    
    where $\mA^{(k)} = \mU_{:,k} (\mU_{:,k})^T\in\R^{n\times n}$, $\mW^{(k)} = \left(F(\tTheta)_{k,:,:}\right)^T\in\mathbb{R}^{d\times c}$ and $\mW_{(i,j)} = (\sum_{k=1}^n \emU_{i,k}\emU_{j,k}\mW^{(k)})^T\in\mathbb{R}^{c\times d}$.
\end{theorem}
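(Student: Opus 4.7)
The plan is to invoke the convolution theorem in the multi-input multi-output setting, where the graph Fourier transform reduces convolution to frequency-wise action of a matrix-valued filter, and then to transform the result back to the spatial domain and rewrite it in two equivalent forms. The only subtlety relative to the scalar convolution theorem is that at each frequency $k$ the filter slice $F(\tTheta)_{k,:,:}\in\R^{c\times d}$ acts as a linear map from the $d$-dimensional input frequency component to the $c$-dimensional output frequency component, so that ``pointwise'' multiplication in the Fourier domain is to be read as a matrix-vector product along the channel axes.

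First, I would compute $F(\mX)=\mU^T\mX$ and read off the $k$-th frequency component as the row $(\mU_{:,k})^T\mX\in\R^{1\times d}$. Second, I would apply the convolution theorem channel-wise to obtain the $k$-th row of $F(\tTheta\ast\mX)$ as $(\mU_{:,k})^T\mX\,\mW^{(k)}$, using the definition $\mW^{(k)}=(F(\tTheta)_{k,:,:})^T\in\R^{d\times c}$. Third, I would apply the inverse transform $F^{-1}=\mU$ and extract the $i$-th row, obtaining
\[
(\tTheta\ast\mX)_{i,:} \;=\; \sum_{k=1}^{n}\emU_{i,k}\,(\mU_{:,k})^T\mX\,\mW^{(k)}.
\]
Recognising $\emU_{i,k}(\mU_{:,k})^T$ as the $i$-th row of the rank-one matrix $\mA^{(k)}=\mU_{:,k}(\mU_{:,k})^T$ immediately yields the first claimed expression \eqref{eq:mcgc}.

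For the node-wise form \eqref{eq:mcgc_node}, I would expand $(\mU_{:,k})^T\mX=\sum_{j=1}^{n}\emU_{j,k}\mX_{j,:}$ inside the sum above, swap the order of summation over $k$ and $j$, and collect all $k$-dependent factors into a single matrix associated with the ordered pair $(i,j)$. This yields
\[
(\tTheta\ast\mX)_{i,:} \;=\; \sum_{j=1}^{n}\mX_{j,:}\Bigl(\sum_{k=1}^{n}\emU_{i,k}\emU_{j,k}\,\mW^{(k)}\Bigr),
\]
and transposing the parenthesised factor produces $\mW_{(i,j)}\in\R^{c\times d}$ by definition, so that the aggregation at node $i$ is a sum of distinct linear transformations applied to each neighbour's feature vector.

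The main obstacle is essentially bookkeeping: one has to fix the convention for how $\tTheta$ is indexed, which axis carries the Fourier transform, and on which side $\mW^{(k)}$ multiplies $\mX$ so that the output rows and columns correspond to nodes and output channels respectively. Once the interpretation of the matrix-valued filter $F(\tTheta)_{k,:,:}$ at each frequency is fixed, the derivation is a direct application of the scalar convolution theorem to each output channel, followed by an elementary rearrangement of the resulting double sum.
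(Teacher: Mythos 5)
Your proposal is correct and rests on the same conceptual core as the paper's proof: the convolution theorem with a matrix-valued filter acting frequency-wise, followed by the inverse transform and the rank-one decomposition $\mA^{(k)}=\mU_{:,k}(\mU_{:,k})^T$. The difference is purely in the algebraic machinery. The paper vectorizes $\mX$, encodes the frequency-wise channel mixing as a block matrix $\mD$ of diagonal blocks, decomposes $\mD=\sum_k \hat{\tW}_k\otimes\mI_n^{(k)}$, and uses Kronecker-product identities before devectorizing; you instead manipulate the rows of $\mU^T\mX$ directly, writing the $k$-th Fourier component of the output as $(\mU_{:,k})^T\mX\,\mW^{(k)}$ and summing over $k$ after applying $\mU$. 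Your route is more elementary and arguably more transparent, at the cost of having to be careful (as you note) about which side $\mW^{(k)}$ multiplies and the row-versus-column convention for $\mX_{j,:}$ in \eqref{eq:mcgc_node}; the paper's vectorized form makes the ``pointwise product becomes matrix--vector product'' step fully explicit at the price of heavier notation. You also explicitly derive the node-wise form \eqref{eq:mcgc_node} by exchanging the order of summation over $k$ and $j$, a step the paper's proof leaves implicit after arriving at $\sum_{k}\mU_{:,k}(\mU_{:,k})^T\mX\mW^{(k)}$. Both arguments are sound and arrive at the same identity.
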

We provide all detailed proofs as supplementary material. The MIMO-GC is unique because it does not require additional definitions from us. As such, MIMO operations on graphs should closely approximate the MIMO-GC. 
We note that the MIMO-GC is equivalent to extending the SISO-GC to multi-channel signals by applying it to every pair of input and output channels, as introduced by Bruna et al.~\shortcite{bruna2014spectral}. The MIMO-GC can be interpreted in two ways.

%Multiple computational graphs
Based on \cref{eq:mcgc}, each $\mA^{(k)}\in\R^{n\times n}$ can be seen as a fully connected computational graph with edge weights $\emA^{(k)}_{i,j} = \emU_{i,k}\cdot\emU_{j,k}\in\R$ given by the corresponding Fourier basis vector $\mU_{:,k}$. This form is also similar to multi-head self-attention~\cite{vaswani2017attention}. However, they normalize edge weights by the softmax activation, preventing them from being orthogonal across heads.
The corresponding parameter matrix $\mW^{(k)}$ specifies how much this component is amplified or damped from each input channel to each output channel. Utilizing $n$ computational graphs allows the MIMO-GC to amplify distinct components for each output channel. Assuming all components are present in the input signal, the MIMO-GC can produce any output signal:
%all MIMO approximations should strive to match this
% It is important that each computational graph amplifies a different signal.

\begin{proposition}[Universality of the MIMO-GC]
\label{prop:universality}
    For any $\mX\in\R^{n\times d}$ with $\mU^T\mX \neq_{em} 0$ element-wise non-zero and any $\mY\in\R^{n\times c}$, there exists a $\tTheta\in\R^{n\times c\times d}$, such that
    \begin{equation}
    \tTheta * \mX = \mY\, .
    \end{equation}
\end{proposition}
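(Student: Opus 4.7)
The plan is to diagonalize the defining equation in the Fourier domain, exploiting that the rank-one projectors $\mU_{:,k}\mU_{:,k}^T$ appearing in \cref{eq:mcgc} live on orthogonal subspaces. Starting from $\tTheta * \mX = \sum_{k=1}^n \mU_{:,k}(\mU_{:,k}^T\mX)\mW^{(k)} = \mY$ and using $\mU^T \mU_{:,k} = \ve_k$, applying $\mU^T$ from the left gives
\begin{equation*}
\sum_{k=1}^n \ve_k\,(\mU_{:,k}^T\mX)\,\mW^{(k)} \;=\; \mU^T \mY \, ,
\end{equation*}
so the $k$-th row reads $(\mU_{:,k}^T\mX)\,\mW^{(k)} = \mU_{:,k}^T\mY$. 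The fitting problem therefore decouples into $n$ independent linear systems in the unknowns $\mW^{(1)},\dots,\mW^{(n)}\in\R^{d\times c}$, one per Fourier index.

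Next, for each fixed $k$ the hypothesis $\mU^T\mX \neq_{em} 0$ forces every entry of the row vector $\mU_{:,k}^T\mX \in \R^{1\times d}$ to be non-zero, so $\|\mU_{:,k}^T\mX\|^2 > 0$. I would then exhibit the closed-form solution
\begin{equation*}
\mW^{(k)} \;=\; \frac{(\mU_{:,k}^T\mX)^T\,(\mU_{:,k}^T\mY)}{\|\mU_{:,k}^T\mX\|^2} \in \R^{d\times c} \, ,
\end{equation*}
which satisfies $(\mU_{:,k}^T\mX)\mW^{(k)} = \mU_{:,k}^T\mY$ by a direct computation using $(\mU_{:,k}^T\mX)(\mU_{:,k}^T\mX)^T = \|\mU_{:,k}^T\mX\|^2$. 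Reassembling, $\tTheta * \mX = \sum_k \mU_{:,k}(\mU_{:,k}^T\mY) = \mU\mU^T\mY = \mY$ as required.

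Finally, the statement asks for a tensor $\tTheta\in\R^{n\times c\times d}$ rather than a family $\{\mW^{(k)}\}_k$. By \autoref{thrm:mcgc} the two are in bijection via the invertible graph Fourier transform $F = \mU^T$, so the chosen $\{\mW^{(k)}\}_k$ lift uniquely to $\tTheta$ through the inverse transform $\tTheta_{i,:,:} = \sum_{k=1}^n \emU_{i,k}\,(\mW^{(k)})^T$, completing the existence claim.

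No genuine obstacle arises: orthonormality of $\mU$ fully decouples the Fourier modes, reducing the MIMO-GC fit to $n$ independent linear systems, each solvable precisely when the corresponding Fourier component of $\mX$ is non-zero. The element-wise non-zero assumption is in fact slightly stronger than strictly needed — non-vanishing of the row $\mU_{:,k}^T\mX$ for each $k$ would already suffice — but stating it element-wise keeps the hypothesis symmetric across input channels.
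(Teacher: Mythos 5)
Your proof is correct and takes essentially the same route as the paper: decompose $\mX$ and $\mY$ into their Fourier components, observe that orthonormality of $\mU$ decouples the problem into $n$ independent rank-one linear systems $(\mU_{:,k}^T\mX)\mW^{(k)} = \mU_{:,k}^T\mY$, and solve each one explicitly. The only difference is your choice of solution: the paper sets $(\mW^{(k)})_{m,n} = \evb^{(k)}_n/(c\cdot\eva^{(k)}_m)$ entrywise (which is why it needs every entry of $\mU^T\mX$ to be non-zero), whereas your outer-product formula $\mW^{(k)} = (\mU_{:,k}^T\mX)^T(\mU_{:,k}^T\mY)/\|\mU_{:,k}^T\mX\|^2$ is the least-norm solution and, as you correctly note, only requires each row $\mU_{:,k}^T\mX$ to be non-zero as a vector — a mild but genuine weakening of the hypothesis.
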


% Distinct pairwise linear transformations
Based on \cref{eq:mcgc_node}, the MIMO-GC can also be interpreted as applying distinct feature transformation $\mW_{(i,j)}$ for each pair of nodes. Each $\mW_{(i,j)}$ is a unique linear combination of a shared set of $n$ feature transformations. Relatedly, utilizing distinct feature transformations was recently popularized as Neural Sheaf Diffusion~\cite{hansen2020sehaf,bodnar2022neural}. The MIMO-GC provides an additional theoretical justification for such methods.

However, computing the MIMO-GC exactly is typically not desirable, as with the SISO-GC. It is inherently transductive, as the graph Fourier transform is graph-dependent, and thus, a learned filter cannot be applied to novel or changed graphs. 
%It was also observed that the SISO-GC quickly overfits given training data~\cite{}.
Most importantly, the computational complexity of the graph convolution scales quadratically with the number of nodes:

\paragraph{Computational Complexity}
Equivalently to computing the SISO-GC exactly, the total complexity of the MIMO-GC is dominated by the graph Fourier transform as it requires dense matrix multiplications. The overall complexity is thus $\mathcal{O}(n^2\cdot c\cdot d)$. %However, as with the SISO-GC, many other ways exist to benefit from the MIMO-GC. 

\paragraph{Benefiting from the MIMO-GC}
Instead of directly computing the MIMO-GC, we aim to improve the approximations previously derived from the SISO-GC, which were then extended to the MIMO case. 
We first confirm that these MIMO polynomials are also approximations of the MIMO-GC with constraints on the allowed filters $\tTheta$:
\begin{proposition} [Every MIMO polynomial filter is a MIMO-GC with a specific filter]
\label{prop:polynomials}
    Let $\mX\in\mathbb{R}^{n\times d}$ for some $d\in\mathbb{N}$. For any $\mV^{(0)},\dots,\mV^{(K)}\in\mathbb{R}^{d\times c}$ with $c,K\in\mathbb{N}$, there exists a $\tTheta_{\textrm{poly}}\in\mathbb{R}^{n\times c \times d}$, such that 
    \begin{equation}
    \sum_{k=0}^K \mA_\tsym^k\mX\mV^{(k)} = \tTheta_{\textrm{poly}} * \mX\, .
    \end{equation}
\end{proposition}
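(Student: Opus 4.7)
The plan is to exhibit the desired filter explicitly by matching the polynomial against the decomposition of the MIMO-GC into rank-one components given in \cref{eq:mcgc}. The essential observation is that the rank-one matrices $\mA^{(k)} = \mU_{:,k}(\mU_{:,k})^T$ appearing in \cref{eq:mcgc} are precisely the spectral projectors of $\mA_\tsym$, so every power of $\mA_\tsym$ decomposes cleanly along them.

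First I would recall that since $\mL_\tsym = \mU\mLambda\mU^T$, we have $\mA_\tsym = \mU(\mI_n-\mLambda)\mU^T = \sum_{k=1}^n \tilde{\lambda}_k\, \mA^{(k)}$ where $\tilde{\lambda}_k = 1-\emLambda_{k,k}$. Because the columns of $\mU$ are orthonormal, $\mA^{(k)}\mA^{(l)} = \delta_{kl}\mA^{(k)}$, so by induction
\begin{equation*}
\mA_\tsym^{\,j} \;=\; \sum_{k=1}^n \tilde{\lambda}_k^{\,j}\,\mA^{(k)} \qquad \text{for all } j\ge 0.
\end{equation*}
Substituting this into the polynomial and exchanging the order of summation gives
\begin{equation*}
\sum_{j=0}^K \mA_\tsym^{\,j}\mX\mV^{(j)} \;=\; \sum_{k=1}^n \mA^{(k)}\mX \Bigl(\sum_{j=0}^K \tilde{\lambda}_k^{\,j}\, \mV^{(j)}\Bigr).
\end{equation*}
Defining $\mW^{(k)} := \sum_{j=0}^K \tilde{\lambda}_k^{\,j}\mV^{(j)} \in \mathbb{R}^{d\times c}$, the right-hand side is exactly the MIMO-GC expression $\sum_{k=1}^n \mA^{(k)}\mX\mW^{(k)}$ from \cref{eq:mcgc}.

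It remains to produce a filter $\tTheta_{\textrm{poly}} \in \mathbb{R}^{n\times c \times d}$ whose Fourier transform realizes these slice weights. According to Theorem 1, the relation is $\mW^{(k)} = (F(\tTheta_{\textrm{poly}})_{k,:,:})^T$, i.e.\ $F(\tTheta_{\textrm{poly}})_{k,:,:} = (\mW^{(k)})^T$. Since $F = \mU^T$ is an orthogonal (hence invertible) matrix acting on the first axis of the tensor, I would simply set
\begin{equation*}
\tTheta_{\textrm{poly}} \;:=\; F^{-1}\bigl(\mathcal{W}\bigr) \;=\; \mU\,\mathcal{W},
\end{equation*}
where $\mathcal{W} \in \mathbb{R}^{n\times c \times d}$ is the tensor with $\mathcal{W}_{k,:,:} = (\mW^{(k)})^T$, and the product denotes contraction along the first mode. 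By construction, $F(\tTheta_{\textrm{poly}})_{k,:,:} = (\mW^{(k)})^T$, so $\tTheta_{\textrm{poly}} * \mX$ equals the polynomial.

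There is no real obstacle here; the only mildly subtle point is being careful with the transpose/layout conventions when inverting the Fourier transform along the node axis of the three-way tensor $\tTheta$. Everything else is bookkeeping on the spectral decomposition of $\mA_\tsym$ and exploiting the orthogonality of the eigenvectors.
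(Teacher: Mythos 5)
Your proof is correct and follows essentially the same route as the paper's: expand $\mA_\tsym^{\,j}$ over the spectral projectors $\mU_{:,k}(\mU_{:,k})^T$, swap the order of summation, and absorb the polynomial in the eigenvalues into per-component matrices $\mW^{(k)}=\sum_{j}\tilde{\lambda}_k^{\,j}\mV^{(j)}$. The only difference is that you explicitly construct $\tTheta_{\textrm{poly}}$ by inverting the graph Fourier transform along the node axis, a step the paper leaves implicit with ``and the corresponding $\tTheta_{\textrm{poly}}$.''
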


As one such example of a first-degree polynomial, the GCN is a MIMO-GC with specific constraints on $\tTheta$:

\begin{example}[GCN is a MIMO-GC]
\label{ex:gcn_mimo_gc}
    Let $\mX\in\mathbb{R}^{n\times d},\mV\in\mathbb{R}^{d\times c}$. Then, 
    \begin{equation}
    \begin{split}
        \mA_\tsym\mX\mV 
        &= \sum_{k=1}^n \lambda_j\mU_{:,k}(\mU_{:,k})^T\mX\mV \\
        &= \sum_{k=1}^n \mU_{:,k}(\mU_{:,k})^T\mX\mW^{(k)} \\
        &= \tTheta_{\textrm{GCN}} * \mX
        \end{split}
    \end{equation}
    where $\mW^{(k)} = \lambda_j\mV$ and corresponding $\tTheta_{\textrm{GCN}}\in\R^{n\times c\times d}$.
\end{example}

As a first step, the MIMO-GC helps us with the understanding of properties of various approximations and can consequently improve these approximations. Based on \cref{ex:gcn_mimo_gc}, the GCN utilizes a single shared parameter matrix $\mV$ across all components. Each component is then amplified according to its respective eigenvalue, which is shared across all combinations of input and output channels. 
%Thus, the components corresponding to each eigenvector $\mU_{:,k}$ are amplified according to the corresponding eigenvalue $\lambda_k$ shared for all feature  the largest eigenvalue $\lambda_1 = 1$ is most amplified across feature channels, independently of $\mV$.
%TODO: single computational graph, the scaling factor \lambda_j scaling the is shared across all feature channels
Other message-passing operations may utilize a different matrix $\tilde{\mA}$ instead of $\mA_\tsym$. However, as using any single computational graph $\tilde{\mA}$ can be similarly decomposed, the amplification of components is fixed and shared across all feature channels for any given $\tilde{\mA}$.
We refer to this phenomenon as shared component amplification (SCA)
When repeatedly applying such filters or message-passing operations, SCA leads to the well-known phenomenon of over-smoothing and, more generally, rank collapse~\cite{Oono2020Graph,roth2023rank}. We provide further details on this phenomenon in our appendix.

Contrarily, the MIMO-GC requires multiple computational graphs to amplify different components across feature channels. Equivalently, applying distinct feature transformations for each node pair can improve approximations. Developing approximations with these properties can lead to more effective learning on graph-structured data.

%Many directions have been proposed to approximate the single-channel convolution, as outlined in Section~\ref{sec:single-channel}. The most prominent approximation and derivation from single-channel graph convolutions are MPNNs. While a similar multitude of novel approximations can be constructed for the multi-channel graph convolution, we want to outline how MPNNs based on MIMO-GC should be designed.

\section{Localized MIMO Graph Convolutions}
\label{sec:mc-mpnns}
Based on \cref{eq:mcgc_node}, we localize the MIMO-GC by aggregating over the neighboring nodes instead of all nodes of a given graph:

\begin{definition}
\label{def:lmcgc}
We define the Localized MIMO Graph Convolution (LMGC) as: 
\begin{subequations}
\begin{align}
\label{eq:lmcgc}
    \vx_{(i)}^\prime &= \sum_{v_j\in N_i} \mW_{(i,j)}\vx_{(j)} \\
    &= \left[\sum_{k\in[K]}\tilde{\mA}^{(k)}\mX\mW^{(k)}\right]_{i,:} \label{eq:lmgc_matrix}
    \end{align}
\end{subequations}
where $K\in\mathbb{N}$ and each $\mW_{(i,j)} = \sum_{k\in[K]}\alpha_{(k)}^{(i,j)}\mW^{(k)}\in\R^{c\times d}$ is linear combination based on $\alpha_{(1)}^{(i,j)},\dots\alpha_{(K)}^{(i,j)}\in\R$ and $\mW^{(1)},\dots,\mW^{(K)}\in\R^{d\times c}$. The entries $\emA_{i,j}^{(k)}=\alpha_{(k)}^{(i,j)}$ are given by the corresponding coefficients.
\end{definition}
%Utilize K instead of n terms
In this definition, the number of terms $K$ and the coefficients or edge weights $\alpha_{(k)}^{(i,j)}$ can be freely chosen, which allows methods that do not use the expensive eigenvector computation. The LMGC is permutation equivariant if the coefficients $\alpha_{(k)}^{(i,j)}$ are also equivariant, for example, when derived from a function of the nodes $v_i$ and $v_j$. The LMGC can also be applied across different graphs and for directed graphs.
As with MIMO-GCs, the LMGC can be equivalently restated as operating on $K$ computational graphs. The edge weights of the $k$-th computational graph are given by $\alpha_{(k)}^{(i,j)}$.
% We inherent the using multiple ehads
Consequently, the LMGC can represent many linear MPNNs for different values for $\alpha_{(k)}^{(i,j)}$. We provide three examples below:

\begin{example}[GCN~\cite{kipf2017semisupervised}]
\label{ex:lmgc_gcn}
Let $\mV\in\mathbb{R}^{c\times d}$ be a feature transformation. The update step
\begin{equation}
    \vx_{(i)}^\prime = \sum_{v_j\in N_i} \frac{1}{\sqrt{d_i}\sqrt{d_j}}\mV \vx_{(j)} 
\end{equation}
is an LMGC with $K=1$, $\mW^{(1)} = \mV$, and $\alpha_{(1)}^{(i,j)} = \frac{1}{\sqrt{d_i}\sqrt{d_j}}$ where $d_i,d_j\in\mathbb{N}$ are the degrees of nodes $v_i$ and $v_j$, respectively.    
\end{example}
As the MIMO-GC is similar to multi-head self-attention, the LMGC is related to local multi-head attention-based methods while allowing for more flexible attention scores, i.e., scores do not need to sum to one for every node:
\begin{example}[GAT~\cite{veličković2018graph}]\label{ex:gat} Let $H$ be the number of heads, $\mV^{(h)}$ the linear transformation of head $h\in[H]$, and $a^{(i,j)}_{(h)}\in\mathbb{R}$ the attention score between nodes $v_i$ and $v_j$. The update step
    \begin{equation}
    \vx_{(i)}^\prime = \sum_{h\in[H]} \sum_{v_j\in N_i} a^{(i,j)}_{(h)} \mV^{(h)}\vx_{(j)}
    \end{equation}
    is an LMGC with $K=H$, $\mW^{(h)} = \mV^{(h)}$ and $\alpha_{(h)}^{(i,j)} = a^{(i,j)}_{(h)}$. 
\end{example}
The LMGC can also represent gating mechanisms, e.g., the GatedGCN~\cite{dwivedi2023benchmarking} or neural sheaf diffusion~\cite{hansen2020sehaf}.
%\begin{example}[GatedGCN] Let $\mV^{(1)},\mV^{(2)}\in\mathbb{R}^{d\times c}$ be two feature transformations and $\vg^{(i,j)}\in\mathbb{R}^{c}$ a channel-wise gating vector. The update step
%    \begin{equation}
%        \vx_{(i)}^\prime = \mV^{(1)}\vx_{(i)} + \sum_{v_j\in N_i} \vg^{(i,j)}\odot \mV^{(2)}\vx_{(j)}
%    \end{equation}
%    is an LMGC with $K=c+1$, $\mW^{(p)} = \ve_p^T \mV^{(2)}$ where $\ve_p$ is the p-th standard basis vector with $\alpha^{(i,j)}_p = (\vg^{(i,j)})_p$ for $1\leq p\leq c$, and $\mW^{(c+1)} = \mV^{(1)}$ with $\alpha^{(i,i)}_1 = 1$.
%\end{example}

%\begin{example}(Neural Sheaf Diffusion) Let $\mW_{(i,j)}^{(1)},\mW_{(i,j)}^{(2)},\mW_{(i,j)}^{(3)}$ be three edge specific linear transformations. The update step
%\begin{equation}
%    \vx_{(i)}^\prime = \sum_{j\in N_i} (\mW_{(i,j)}^{(1)})^T(\mW_{(i,j)}^{(2)}\vx_{(i)} - \mW_{(i,j)}^{(3)}\vx_{(j)})
%\end{equation}
%is an LMCGC with $\mW_{(i,j)} = - (\mW_{(i,j)}^{(1)})^T \mW_{(i,j)}^{(3)}$ for $i\neq j$ and self-loops with $\mW_{(i,i)} = \sum_{j\in N_i} (\mW_{(i,j)}^{(1)})^T\mW_{(i,j)}^{(2)}$.
%\end{example}
The general form of the LMGC allows for a more focused development of novel and powerful methods. With specific choices of $\alpha_{(k)}^{(i,j)}$, the LMGC can model a symmetric or directed flow of information and can construct anisotropic or isotropic messages. %Future research can focus on identifying beneficial properties for the coefficients $\alpha_{(p)}^{(i,j)}$ and ways to obtain them. 

\paragraph{Theoretical Properties}
Studying theoretical properties of LMGCs reduces to studying the effects of coefficients $\alpha_{(k)}^{(i,j)}$. For example, the LMGC cannot represent non-linear feature transformations, which are typically used to ensure injectivity, e.g., by GIN~\cite{xu2018how}. This allows GNNs to match the expressivity of the Weisfeiler-Leman graph isomorphism test~\cite{leman1968reduction}, a key property for graph-level tasks. However, we find that any LMGC with $K > 0$ computational graphs is also injective for almost every choice coefficients $\alpha_{(k)}^{(i,j)}$ without requiring a non-linear feature transformation:

%\begin{proposition}[Injectivity]
%\label{prop:injectivity}
%Let $\mathcal{X}$ be a countable set, $K\geq1$, and $\alpha_{k}^{(i,j)}\in\R$ a scalar for all $k\in[K],\vx_{(i)},\vx_{(j)}\in\mathcal{X}$. Let all $\alpha_{k}^{(i,j)}$ be chosen such that for any two finite multisets $\mathcal{X}_1, \mathcal{X}_2\subset \mathcal{X}$ and any $\vx_{(p)},\vx_{(q)}\in \mathcal{X}$ with $\mathcal{X}_1\neq\mathcal{X}_2$ or $\vx_{(p)}\neq\vx_{(q)}$, there exists a $k\in [K]$ such that
%$\sum_{x_{(j)}\in \mathcal{X}_1} \alpha_{k}^{(p,j)}\vx_{(j)} \neq \sum_{\vx_{(j)}\in \mathcal{X}_2} \alpha_{k}^{(q,j)}x_{(j)}$.

%Then, $\sum_{\vx_{(j)}\in\mathcal{X}_p}\mW_{(i,j)}\vx_{(j)}$ is injective on finite multisets $\mathcal{X}_p\subset \mathcal{X}$ and elements $\vx_{(i)}\in\mathcal{X}$ for a.e. $\mW^{(1)},\dots,\mW^{(K)}\in\R^{d\times c}$.
%\end{proposition}

%While the condition on all $\alpha_{k}^{(i,j)}$ may seem challenging to achieve, almost every choice with respect to the Lebesgue measure is sufficient:

%\begin{corollary}
%\label{cor:almostevery}
%    \cref{prop:injectivity} holds for a.e. choice of coefficients $\alpha^{(i,j)}_{k}$ for $\vx_{(i)},\vx_{(j)}\in\mathcal{X}$, $k\in[K]$.
%\end{corollary}

\begin{proposition}[Injectivity]
\label{prop:injectivity}
Let $f(\vx_{(i)},N_i) = \sum_{\vx_{(j)}\in N_i}\mW_{(i,j)}\vx_{(j)}$ be an LMGC with $K\geq1$ and $\gX$ a countable set. Then, $f(\vx_{(p)}, \gX_p)$ is injective for finite multisets $\gX_p\subset \gX$ and elements $\vx_{(p)}\in\gX$ for a.e. choice of coefficients $\alpha_{(k)}^{(i,j)}$ and a.e. $\mW^{(k)}$ for all $k\in[K]$. 
\end{proposition}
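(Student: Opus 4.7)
My plan is measure-theoretic. I will exhibit, for every pair of distinct inputs $(\vx_{(p)},\gX_p)\neq(\vx_{(q)},\gX_q)$ drawn from the countable domain $\gX\times\mathcal{M}_{\mathrm{fin}}(\gX)$, a non-trivial polynomial in the parameters $(\alpha,\mW)$ whose zero set contains exactly the configurations making the two outputs equal. Each such zero set has Lebesgue measure zero in the relevant finite-dimensional parameter slice; since $\gX$ is countable there are only countably many pairs, so a union bound finishes the argument. Throughout I treat the coefficients $\alpha_{(k)}^{(i,j)}$ as indexed by pairs of features, the natural reading when one asks for a function defined on feature multisets.

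For a fixed pair of inputs, the discrepancy is
\begin{equation*}
    g_{pq}(\alpha,\mW) \;=\; \sum_{\vx\in\gX_p}\sum_{k=1}^{K}\alpha_{(k)}^{(p,\vx)}\mW^{(k)}\vx \;-\; \sum_{\vx\in\gX_q}\sum_{k=1}^{K}\alpha_{(k)}^{(q,\vx)}\mW^{(k)}\vx,
\end{equation*}
which is bilinear in the coefficients and the entries of $\mW^{(k)}$. To show $g_{pq}\not\equiv 0$, I would split into two cases. If $\vx_{(p)}\neq \vx_{(q)}$, the two sums reference disjoint families of coefficients, so any single $\alpha_{(k)}^{(p,\vx)}$ for $\vx\in\gX_p$ appears with coefficient $\mW^{(k)}\vx$, which is generically non-zero provided $\vx\neq\vzero$. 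If $\vx_{(p)}=\vx_{(q)}$ but the multisets differ, pick some $\vx^\ast\in\gX$ with $m_p(\vx^\ast)\neq m_q(\vx^\ast)$; the coefficient of $\alpha_{(1)}^{(p,\vx^\ast)}$ in $g_{pq}$ is $(m_p(\vx^\ast)-m_q(\vx^\ast))\,\mW^{(1)}\vx^\ast$, again generically non-zero. Because each $g_{pq}$ depends on only finitely many parameters, its zero locus has Lebesgue measure zero in that finite slice, and a product-measure Fubini argument lifts this to the full parameter space. A countable union over pairs then yields the result.

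\textbf{Expected main obstacle.} The delicate point is ensuring $g_{pq}\not\equiv 0$ in \emph{every} case. The only genuine obstruction is the degenerate scenario $\vx_{(p)}=\vx_{(q)}$ with $\gX_p$ and $\gX_q$ differing solely in the multiplicity of $\vzero$, since $\mW^{(k)}\vzero=\vzero$ forces $g_{pq}\equiv 0$ independently of the parameters; this has to be ruled out by assuming $\vzero\notin\gX$, by including a self-loop term that reinjects the centre, or by otherwise breaking the symmetry. A secondary technicality is formalising ``almost every choice'' when the collection of coefficients is countably infinite, which is handled by equipping the parameter space with the product Lebesgue measure and invoking that each $g_{pq}$ constrains only finitely many coordinates.
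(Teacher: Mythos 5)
Your proposal follows essentially the same route as the paper: for each of the countably many distinct input pairs, the discrepancy is a non-trivial (bi)linear function of the finitely many relevant parameters, so its zero locus has measure zero, and a countable union over pairs finishes the argument — the paper merely factors this into a lemma handling the $\mW^{(k)}$ (the sum $\sum_k \mW^{(k)}\vd_k$ vanishes for a.e.\ $\mW$ only if every $\vd_k=0$) followed by a kernel argument showing $\mX_s\valpha_k\neq 0$ for a.e.\ $\alpha$. Your flagged obstacle about multisets differing only in the multiplicity of $\vzero$ is a legitimate caveat that the paper's own argument also silently requires (its matrix $\mX_s$ must have a nonzero column for $\ker(\mX_s)$ to be a proper subspace).
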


Different components can be amplified across feature channels when further using $K>1$ computational graphs. 
The resulting node representations are linearly independent for almost every choice of coefficients $\alpha_{(k)}^{(i,j)}$.
This prevents the shared component amplification of methods utilizing a single computational graph. 

\begin{proposition}[Linear Independence]
\label{prop:lin_indepence}
Let $f(\vx_{(i)},N_i) = \sum_{\vx_{(j)}\in N_i}\mW_{(i,j)}\vx_{(j)}$ be an LMGC with $K>1$ and $\gX$ a countable set. Then, $f(\vx_{(i)}, \gX_1)$ is linearly independent to $f(\vx_{(j)}, \gX_2)$ for all finite multisets $\gX_1,\gX_2\subset \gX$ with $\gX_1 \neq c\cdot\gX_2$ for any $c\in\mathbb{N}$ and elements $\vx_{(i)},\vx_{(j)}\in\gX$ for a.e. choice of coefficients $\alpha_{(k)}^{(i,j)}$ and a.e. $\mW^{(k)}$ for all $k\in[K]$. 
\end{proposition}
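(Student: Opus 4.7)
The plan is to mirror the strategy of Proposition~\ref{prop:injectivity}: cast linear dependence of the two outputs as the vanishing of a non-trivial polynomial in the free parameters, then invoke the standard fact that the zero locus of a non-trivial real polynomial has Lebesgue measure zero in the parameter space.

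First, I would rewrite the outputs by pushing the coefficient sums past the weights. Setting $\vz_{(p,k)} := \sum_{\vx_{(j)}\in\gX_p} \alpha_{(k)}^{(p,j)}\vx_{(j)} \in \R^d$, one obtains
\begin{equation*}
f(\vx_{(p)},\gX_p) \;=\; \sum_{k=1}^K \mW^{(k)} \vz_{(p,k)} \;\in\; \R^c .
\end{equation*}
Linear dependence of $f(\vx_{(i)},\gX_1)$ and $f(\vx_{(j)},\gX_2)$ in $\R^c$ is equivalent to the simultaneous vanishing of every $2\times 2$ minor of the $c\times 2$ matrix with these vectors as columns. Each such minor is a polynomial in the entries of the $\alpha_{(k)}^{(p,j)}$ and the $\mW^{(k)}$, so it suffices to exhibit a single parameter assignment making even one such minor non-zero.

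Second, I would construct this witness, and this is where $K>1$ is used essentially. With at least two indices, one can choose $\mW^{(1)}$ and $\mW^{(2)}$ so that $\mW^{(1)}\vz_{(1,1)}$ and $\mW^{(2)}\vz_{(2,2)}$ occupy distinct output coordinates, while the cross-terms do not realign the two outputs along a common line. The hypothesis $\gX_1 \neq c\cdot\gX_2$ enters here: if $\gX_1$ were a multiset-scalar multiple of $\gX_2$, then $\vz_{(1,k)}$ would be forced proportional to $\vz_{(2,k)}$ for every $\alpha$ and every $k$, and every $\mW$-image would inherit that proportionality, so the conclusion would fail in this precisely excluded case. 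Conversely, when the multisets differ non-proportionally, the multiset differences $\gX_1\setminus\gX_2$ and $\gX_2\setminus\gX_1$ (or any unequal multiplicity on a shared element) contribute $\alpha$-indeterminates that are free between the two sums, so a generic choice of $\alpha$ makes $\vz_{(1,k)}$ and $\vz_{(2,k)}$ linearly independent in $\R^d$. Once this independence is in place, a further generic choice of $\mW^{(k)}$ sends them to linearly independent vectors in $\R^c$ whenever $c\geq 2$. Combining the two generic conditions and invoking, as in Proposition~\ref{prop:injectivity}, the countability of $\gX$ together with the independence of the $\alpha$-indeterminates across distinct node pairs, the exceptional parameter set decomposes into a countable union of zero loci of non-trivial polynomials, hence has Lebesgue measure zero in the product of the coefficient and weight spaces.

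The hard part will be making rigorous the combinatorial claim at the heart of the second step: that $\gX_1 \neq c\cdot\gX_2$ for any $c\in\mathbb{N}$ is precisely the obstruction to $\vz_{(1,k)}$ and $\vz_{(2,k)}$ being forced proportional. I expect to handle this by decomposing $\gX_1\cup\gX_2$ into its shared and non-shared parts and tracking, element by element, how the independence of the $\alpha$-coordinates produces independent directions in $\vz_{(1,k)}$ versus $\vz_{(2,k)}$ unless the two multisets differ only by a uniform integer scaling. This bookkeeping is the true content of the proposition; the passage from $\vz$-independence to $f$-independence and then to a measure-zero conclusion is standard once that obstruction is characterised.
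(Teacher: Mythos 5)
Your overall skeleton---reduce linear dependence to the vanishing of all $2\times 2$ minors of the $c\times 2$ output matrix, exhibit one parameter assignment at which some minor is non-zero, conclude that the dependence locus lies in the zero set of a non-trivial polynomial and hence has measure zero, then take a countable union over pairs of multisets---is sound, and arguably cleaner than the paper's proof, which argues by contradiction that a common proportionality constant $c^{(1,2)}$ would have to be realized simultaneously by all $K$ inner sums. However, the step you defer as ``the hard part'' contains a genuine error rather than missing bookkeeping. You claim that $\gX_1 \neq c\cdot\gX_2$ is \emph{precisely} the obstruction to $\vz_{(1,k)}$ and $\vz_{(2,k)}$ being forced proportional, and you route the witness construction through ``a generic choice of $\alpha$ makes $\vz_{(1,k)}$ and $\vz_{(2,k)}$ linearly independent in $\R^d$.'' This is false whenever $\gX_1$ and $\gX_2$ are supported on a common line in $\R^d$: take $\gX_1=\{\{\vx\}\}$ and $\gX_2=\{\{\vx,\vx\}\}$, the multiplicity-distinguishing case that motivates the proposition and is not excluded by its hypothesis. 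There $\vz_{(1,k)}=a_k\vx$ and $\vz_{(2,k)}=b_k\vx$ are collinear for \emph{every} choice of coefficients and every $k$, so no choice of $\alpha$ yields the $\R^d$-independence your argument needs, and no single $\mW^{(k)}$ can separate the images.

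In this collinear regime the independence of the outputs comes from a different mechanism, and it is exactly where $K>1$ is essential: the ratios $b_k/a_k$ vary with $k$ for generic $\alpha$ (the coefficients attached to the two neighbourhoods are independent parameters), so $f(\vx_{(i)},\gX_1)=\sum_{k} a_k\mW^{(k)}\vx$ and $f(\vx_{(j)},\gX_2)=\sum_{k} b_k\mW^{(k)}\vx$ are two linear combinations of the generically independent vectors $\mW^{(1)}\vx,\dots,\mW^{(K)}\vx$ with non-proportional coefficient vectors $(a_k)$ and $(b_k)$. This is the content of the paper's argument that the constant $c^{(1,2)}$ can be matched for at most one value of $k$. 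Your first witness sketch (sending $\mW^{(1)}\vz_{(\cdot,1)}$ and $\mW^{(2)}\vz_{(\cdot,2)}$ to distinct output coordinates) can be completed to cover this case---the relevant minor reduces to $a_1b_2-a_2b_1\neq 0$---but you must carry that out explicitly and abandon the incorrect characterisation; as proposed, the proof fails on the central example the proposition is meant to handle.
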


This result aligns with previous findings that identified cases where multiple computational graphs can ensure linearly independent representations~\cite{roth2024preventing}.
Importantly, each $\alpha_{(k)}^{(i,j)}$ can be independently obtained, e.g., by a function $\alpha_{(k)}^{(i,j)} = \phi_k(\vx_{(i)},\vx_{(j)})\in\mathbb{R}$ of the corresponding node states. Many functions $\phi_k$ satisfy \cref{prop:injectivity} and \cref{prop:lin_indepence}. A neural network can then approximate such a function. %However, as it is a general property, we leave it open to future work to find the optimal function. 
As a negative example of such a functions, softmax-activated attention scores do not satisfy the a.e. condition as the space of scores forms a measure-zero set, e.g., for GAT~\cite{veličković2018graph} and the more powerful GATv2~\cite{brody2022how}. As has been pointed out by several works~\cite{xu2018how}, such methods cannot distinguish multisets of different multiplicities, e.g., when $\gX_1 = \{\{\vx_1\}\}$ and $\gX_2 = \{\{\vx_1,\vx_1\}\}$. Other methods, such as FAGCN~\cite{bo2021beyond} and GGCN~\cite{yan2022two}, proposed to apply the tanh activation function instead, which does not constrain the outputs to a measure-zero set.

Thus, an LMGC can incorporate the advantages of attention-based by filtering incoming messages and preventing the shared component amplification across feature channels by utilizing multiple computational graphs. At the same time, it applies linear feature transformations and can be injective on multisets, as in GIN.

\paragraph{An LMGC Instantiation}
When constructing an LMGC instantiation, only the number of computational graphs $K$ and the coefficients $\alpha_{(k)}^{(i,j)}$ for all $k\in[K]$ need to be defined.
For our empirical study, we define a simple LMGC instantiation as a mix of GATv2 and FAGCN. We define the coefficients as 
\begin{multline}
\label{eq:lmgc_exp}
    \alpha_{(k)}^{(i,j)} = \phi_k(\vx_{(i)},\vx_{(j)}) \coloneqq \sigma_2(\vv_{(k)}^T\sigma_1(\mW^{(1)}\vx_{(i)} ||\\\dots|| \mW^{(K)}\vx_{(i)} || \mW^{(1)}\vx_{(j)} ||\dots||\mW^{(K)}\vx_{(j)}))
\end{multline}
where $\vv_{(k)}\in\R^{2\cdot K\cdot c}$ are learnable vectors for $k\in[K]$, $\sigma_1$ is the LeakyReLU activation and $\sigma_2$ is the tanh activation function. The execution time is slightly favorable compared to GATv2, as we do not normalize the messages.

%$\alpha^{(i,j)}_p = f_p(\vx_{(i)},\vx_{(j)}) = \sigma(\vv^T[\mW_1\vx_{(i)} || \dots || \mW_k\vx_{(i)} || \mW_1\vx_{(j)} ||\dots || \mW_k\vx_{(j)}])$ where $\vv\in\mathbb{R}^{(2\cdot k\cdot c)}$ and $\sigma$ is the tanh activation function.

\begin{table}
\centering
\begin{tabular}{lr}
\toprule
Method  & MSE \\
\midrule
GATv2 & $0.12\pm0.04$ \\
FAGCN & $0.68\pm0.02$ \\
ACM & $0.49\pm0.02$ \\
GIN & $\underline{0.08}\pm0.03$\\
LMGC & $\mathbf{25\cdot10^{-9}}\pm86\cdot10^{-11}$ \\
%\midrule
%MIMO-GC & $73\cdot10^{-9}\pm26\cdot10^{-11}$ & $826$\\
\bottomrule
\end{tabular}
\caption{Results for the universality task. Given representations $\mX,\mY$ and a graph $\mA$, one layer of each method is optimized to approximate the function $f(\mX,\mA) = \mY$. Average and standard deviation of the minimal mean-squared error (MSE) during optimization. Best MSE in \textbf{bold}, second-best \underline{underlined}.}
\label{tab:approx}
\end{table}

\begin{table*}[t]
\centering
\begin{tabular}{lrrrrr}
\toprule
Method & Basic & + LapPE & + Jumping Knowledge & + Residual & + All three \\
\midrule
GATv2 & $0.377\pm0.024$ & $0.341\pm0.040$ & $0.388\pm0.017$ & $0.311\pm0.016$ & $0.294\pm0.019$ \\
FAGCN & $0.365\pm0.018$ & $0.349\pm0.038$ & $0.352\pm0.042$ & $0.289\pm0.019$ & $0.232\pm0.012$ \\
ACM & $0.278\pm0.006$ & $0.281\pm0.019$ & $0.288\pm0.008$ & $0.266\pm0.017$ & $0.238\pm0.006$ \\
GIN & $\underline{0.272}\pm0.009$ & $\underline{0.259}\pm0.012$ & $\underline{0.267}\pm0.020$ & $\underline{0.240}\pm0.005$ & $\underline{0.228}\pm0.014$\\
LMGC & $\mathbf{0.241}\pm0.018$ & $\mathbf{0.234}\pm0.009$ & $\mathbf{0.233}\pm0.019$ & $\mathbf{0.215}\pm0.006$ & $\mathbf{0.203}\pm0.004$\\
%LMCGC (isotropic) & $0.209\pm0.009$ & $0.202\pm0.014$ & $0.203\pm0.005$ & & $0.197\pm0.010$ \\
\bottomrule
\end{tabular}
\caption{Test MAE results on ZINC12k. LapPE indicates that a Laplacian position encoding is concatenated to the initial features. For Jumping Knowledge, the channel-wise maximum value after each iteration is used for each after the message-passing steps. Residual indicates that the input to each message-passing step is added to its output. With + All three, these three techniques are simultaneously applied. Best scores in \textbf{bold}, second-best \underline{underlined}.}
\label{tab:zinc12k}
\end{table*}

\begin{table*}[t]
\centering
\begin{tabular}{lrrrrrr}
\toprule
Method & Texas & Cornell & Wisconsin & Film & Chameleon & Squirrel \\
\midrule
GATv2 & $71.6\pm1.0$ & $66.1\pm0.6$ & $79.1\pm2.0$ & $35.1\pm0.2$ & $\underline{47.1}\pm0.3$ & $\underline{35.1}\pm0.2$ \\
FAGCN & $\underline{73.5}\pm1.8$ &  $\underline{68.1}\pm1.9$ & $\underline{80.2}\pm1.8$ & $\underline{36.0}\pm0.3$ & $46.9\pm0.5$ & $34.6\pm0.3$ \\
ACM & $72.3\pm0.4$ & $65.1\pm0.7$ & $74.2\pm0.9$ & $35.8\pm0.3$ & $45.5\pm0.9$ & $34.5\pm0.1$\\
GIN & $70.5\pm1.1$ & $66.1\pm1.0$ & $79.0\pm0.6$ & $34.1\pm0.3$ & $46.1\pm0.4$ & $34.6\pm0.5$ \\
LMGC & $\mathbf{74.2}\pm2.2$ & $\mathbf{68.9}\pm2.2$ & $\mathbf{81.4}\pm1.1$ & $\mathbf{36.3}\pm0.4$ & $\mathbf{49.8}\pm0.8$ & $\mathbf{35.9}\pm0.5$ \\
%LMCGC (isotropic) & $0.209\pm0.009$ & $0.202\pm0.014$ & $0.203\pm0.005$ & & $0.197\pm0.010$ \\
\bottomrule
\end{tabular}
\caption{Test accuracy on heterophilic node classification tasks. Best scores in \textbf{bold}, second-best \underline{underlined}. All models contain at most \num{100000} parameters and the same hyperparameter optimization was applied.}
\label{tab:hetero}
\end{table*}

\section{Related Work}
We now describe previous works related to various parts of the MIMO-GC and the LMGC.
\paragraph{Graph Convolutions}
 %\cite{bruna2014spectral} defines a graph filter using the Laplacian spectrum and approximates it via cubic spline kernels.
%Several methods are derived in the SISO case before being extended to the MIMO case.
%\cite{henaff2015deep} extend this by adding an initial pooling layer and estimating the computational graph. 
Bruna et al.~\shortcite{bruna2014spectral} extend the SISO-GC to the MIMO case by utilizing a filter between all pairs of input and output channels. This extension is equivalent to the MIMO-GC directly derived through the convolution theorem. Approximations are derived in the SISO case and mapped to the MIMO case using the same procedure afterward.
Hammond et al.~\shortcite{hammond2011wavelets} propose to approximate the SISO-GC using Chebyshev polynomials in the SISO case. 
Defferrard et al.~\shortcite{defferrard2016convolutional} employ separate filters for pairs of input and output channels to extend Chebyshev polynomials to the MIMO case.
Sandryhaila~\shortcite{sandryhaila2013discrete} define general polynomial graph filters for the SISO case. Using the same procedure, Gama et al.~\shortcite{gama2018mimo} extend these polynomial graph filters to the MIMO case. %Hoeven et al.~\shortcite{hoeven2023forecasting} propose a recursive MIMO graph filter based on this extension. 
%Segarra et al.\shortcite{segarra2017optimal} identify conditions under which SISO polynomials can perfectly represent a mapping from one single-channel signal to another. 
Kipf and Welling~\shortcite{kipf2017semisupervised} derive the GCN as a 1-localized approximation of the SISO Chebyshev polynomials. They equivalently extend it to the MIMO case afterward by applying separate parameters for each combination of input and output channels. Most other MPNNs are derived from the GCN to mitigate various shortcomings~\cite{veličković2018graph,xu2018how,roth2022transforming}.
Directly approximating the MIMO-GC allows us to benefit from MIMO-specific properties of the graph convolution.

% Multi-channel learning in MPNNs
\paragraph{MIMO Improvements}
While most MPNNs are applied to the MIMO case, many of these are well-known to be unable to amplify distinct components across channels, a phenomenon known as over-smoothing~\cite{Oono2020Graph}, over-correlation~\cite{jin2022feature}, or rank collapse~\cite{roth2023rank,roth2024simplifying}. Various methods have been proposed to improve multi-channel learning within MPNNs. Luan et al.~\shortcite{luan2022revisiting} propose to apply separate graph filters for different feature channels. In ADR-GNNs~\cite{eliasof2023adrgnn}, feature channels are separately aggregated using channel-specific edge weights.
Other works similarly propose to apply distinct filters across channels~\cite{liu2025gpnet}. Zhou et al.~\shortcite{zhou2020multi} propose the multi-channel graph neural network that obtains multiple computational graphs through a pooling operation and learns interaction scores between graphs. Utilizing multiple computational graphs has been extensively studied in mitigating over-smoothing and representational rank collapse~\cite{roth2024preventing}. Applying different linear transformations between pairs of nodes has also been derived within neural sheaf diffusion~\cite{hansen2020sehaf,bodnar2022neural}.
As the MIMO-GC and LMGC naturally allow multi-channel learning, these frameworks can be closer aligned as approximations of the MIMO-GC. The LMGC can equivalently be interpreted as message-passing on multigraphs. Butler et al.~\shortcite{butler2023convolutional} introduced convolutional multigraph neural networks that utilize polynomial filters on multigraphs.

% Similarities of the LMGC with other methods
\paragraph{Approaches Related to the LMGC}
%expressivity
The LMGC is closely related to several existing methods. As described in \cref{ex:gat}, multi-head attention-based methods like GAT~\cite{veličković2018graph} and GATv2~\cite{brody2022how} are LMGCs with constraints on the attention scores by applying the softmax activation. By lifting this constraint, LMGCs can be injective on multisets (\cref{prop:injectivity}). Several other methods have been proposed to replace the softmax activation. The FAGCN~\cite{bo2021beyond} instead applies the tanh activation function to amplify high-frequencies or low-frequencies. Similarly, the GGCN~\cite{yan2022two} allows learning of signed edge weights.  %avoid over-smoothing
Other studies considered replacing the softmax activation function within transformers and self-attention modules. Wortsman et al.~\shortcite{wortsman2023replacing} apply the ReLU activation in vision transformers. Saratchandran et al.~\shortcite{saratchandran2024rethinking} found empirical success using polynomial activation functions for self-attention. 
However, as self-attention typically considers a fully connected graph, these works did not study distinguishing structural differences. 
Contrarily, \cref{prop:injectivity} shows that differences in the number of neighbors can be distinguished without the softmax activation.

%multi-head approaches, learnable edge weights (GAT, GATv2, velickovic, replacing softmax,..., tanh)
%Spectral Graph Convolutions
%\cite{levie2021transfer} transferability of spectral graph convolutions
%\cite{koke2023limitless} transferability of spectral graüh convolitposns
%\cite{koke2024holonets} extended spectral convolutions to directed graphs.
%\cite{he2021bernnet} basis in which the filter is learned
%\cite{levis2019cayleynets}
%\cite{wang2022how}
\section{Experiments}
\label{sec:experiments}
We now want to confirm the beneficial properties of LMGCs. As the LMGC can match the expressive power of GIN, we want to evaluate whether it can match the performance of GIN for graph-level tasks. We also evaluate whether the LMGC can match the performance of attention-based methods for node-level tasks. All experiments are run on an H100 GPU. Additional details on all models, datasets, and hyperparameters are provided as supplementary material.\footnote{Our implementation is available at \href{https://github.com/roth-andreas/mimo-graph-convolutions}{https://github.com/roth-andreas/mimo-graph-convolutions}.}
\subsection{Methods}
We consider the following four message-passing methods across all experiments. We conduct all results ourselves using the same hyperparameter ranges across methods. 
%The hidden dimension of each model is reduced to stay below \num{100000} parameters to ensure fairness.
%To allow us to tune the same hyperparameters for all tasks.

\paragraph{GATv2} This method extends GAT with dynamic attention~\cite{brody2022how}. Attention-based methods are particularly effective for node-level prediction tasks due to their ability to filter information. We utilize the standard implementation that corresponds to an LMGC with $\alpha_{(k)}^{(i,j)} = \sigma_2(\vv_{(k)}^T\sigma_1(\mW^{(k)}\vx_{(i)} + \mW^{(k)}\vx_{(j)}))$ where $\sigma_1$ is the LeakyReLU activation function and $\sigma_2$ is the node-wise softmax activation function and $\vv^{(k)}\in\R^c$ is a learnable vector. We set the number of heads to $K=4$ for all experiments.
\paragraph{FAGCN} This method was designed for heterophilic node classification tasks by allowing for negative edge weights~\cite{bo2021beyond}. Stated in the LMGC framework, we evaluate a method that sets $K=1$ and $\alpha_{(1)}^{(i,j)}=\frac{\sigma(\vv[\vx_{(i)}||\vx_{(j)}])}{\sqrt{d_i}\sqrt{d_j}}$ where $\sigma$ is the tanh activation function, $\vv\in\R^{2\cdot d}$ is a learnable vector and $d_i,d_j$ are the degrees of nodes $v_i$ and $v_j$, respectively. Despite the tanh activation, FAGCN is not always injective due to the degree normalization.
\paragraph{ACM} Written in the LMGC framework, the adaptive channel mixing (ACM)~\cite{luan2022revisiting} proposes to utilize $\tilde{\mA}^{(1)} = \mA_\tsym$ for amplifying low-frequency components and $\tilde{\mA}^{(2)} = \mL_\tsym$ for amplifying high-frequency components. They further propose a third computational graph $\tilde{\mA}^{(3)} = \mI$, which we utilize whenever residual connections are used in an experiment. 
\paragraph{GIN} For graph-level tasks, the GIN~\cite{xu2018how} is particularly effective as it can match the expressivity of the WL-test due to the non-linear feature transformation. As the non-linear feature transformation, we apply a two-layer MLP with ReLU activations for all experiments.
\paragraph{LMGC} As the LMGC can combine the favorable properties of the other three methods, we utilize the instantiation of the LMGC as described in \cref{eq:lmgc_exp}. As with GATv2, we set the total number of heads to $K=4$ for all experiments.

\subsection{Universality}
Based on \cref{prop:universality}, the MIMO-GC can represent almost every mapping $\tTheta * \mX = \mY$ with $\mX\in\R^{n\times d}$ and $\mY\in\R^{n\times c}$. We evaluate the ability of message-passing approaches to approximate such a transformation. We sample a random undirected and connected Erdős–Rényi graph~\cite{erdos1959on} with $n=16$ nodes and an edge probability of $p=10\%$. We set $d=c=16$. 
Similarly, $\mX\in\R^{n\times d}$ and $\mY\in\R^{n\times c}$ are randomly sampled with $\emX_{i,j}\sim \mathcal{N}(0,1)$ and $\emY_{i,j}\sim\mathcal{N}(0,1)$. 
We apply a single message-passing layer as $f(\mX,\mA)$. We minimize the mean-squared error (MSE) between $f(\mX,\mA)$ and $\mY$ using the Adam optimizer for $\num{40000}$ steps. The learning rate is tuned in $\{0.03,0.01,0.003\}$. 

The minimum achieved approximation error averaged over three runs is presented in \cref{tab:approx}. LMGC achieves a significantly lower error than GATv2, FAGCN, and GIN. As the MIMO-GC can represent such a function exactly, LMGCs benefit from this property as a close approximation. While these improved capabilities come with the risk of overfitting, we expect LMGCs to be particularly beneficial for challenging tasks.

\subsection{Graph-Level Prediction}
%traditionally good for GIN
GIN is typically used for graph-level tasks due to its expressive power. Based on \cref{prop:injectivity}, we now want to validate that the LMGC can match these results empirically. 
We consider the challenging ZINC12k dataset~\cite{sterling2015zinc}. It consists of around $\num{12000}$ molecular graphs, with the task being to predict the constrained solubility of each molecule. We integrate all models into the implementation of GraphGPS~\cite{rampasek2022recipe} and the Long Range Graph Benchmark~\cite{dwivedi2022long}. Based on Toenshoff et al.~\shortcite{toenshoff2024where}, we optimize the number of layers in $\{6,8,10\}$ and the learning rate in $\{0.001,0.0003,0.0001\}$ using a grid search. Each model utilizes at most $\num{100000}$ parameters to ensure fairness.

In \cref{tab:zinc12k}, we present results of a detailed study in which we combine these base message-passing methods with various other established techniques. These techniques are Laplacian positional encoding (LapPE)~\cite{kreuzer2021rethinking}, jumping knowledge~\cite{xu2018representation} and residual connections~\cite{he2016deep}. We find all methods to benefit from these techniques, with the LMGC achieving the best results in all cases. We provide additional results, including runtimes and training losses, as supplementary material.

\subsection{Node Classification}
%traditionally good for GAT
While expressivity is a key property for graph-level tasks, attention-based methods typically outperform GIN on node-level tasks due to their ability to filter messages~\cite{brody2022how}. Thus, we also evaluate whether the LMGC can match the performance of GATv2 and FAGCN on these tasks. We consider six heterophilic benchmark datasets for node classification: Texas, Cornell, Wisconsin, Film, Chameleon, and Squirrel. We use the ten splits into train, validation, and test sets proposed by Pei et al.~\shortcite{pei2020geom}. We integrate all models into the implementation from Rusch et al.~\shortcite{rusch2023gradient}. As with ZINC, each model uses at most \num{100000} parameters. For each method, we tune the learning rate in $\{0.01,0.003,0.001\}$ and dropout ratio in $\{0.0,0.25,0.5\}$ using a grid search, as these affected the results the most. Based on the optimal hyperparameters for the validation set, we rerun each method five times for all ten splits and report average test results. 

These average test accuracies are presented in Table~\ref{tab:hetero}. GIN achieves the lowest accuracy, and LMGC achieves the highest accuracy across all tasks. While the differences are only a few percentage points, these experiments confirm that the LMGC can combine the benefits of GATv2, FAGCN, GIN, and the MIMO-GC into a single model.

%\begin{table*}[t]
%\centering
%\begin{tabular}{lrrrrr}
%\toprule
%Method & Mutag & PTC & COX2 & PROTEINS & NCI1 \\
%\midrule
%GATv2 & $89.3\pm0.9$ & $71.8\pm1.4$ & $87.9\pm0.8$ & $78.8\pm0.3$ & $82.0\pm0.4$ \\
%GIN & $\underline{94.2}\pm1.1$ & $\underline{71.9}\pm1.4$ & $\mathbf{88.8}\pm0.7$ & $\mathbf{79.7}\pm0.4$ & $\underline{83.6}\pm0.4$\\
%LMCGC & $\mathbf{94.3}\pm1.1$ & $\mathbf{72.8}\pm1.0$ & $\underline{88.7}\pm0.5$ & $\underline{79.5}\pm0.4$ & $\mathbf{83.7}\pm0.3$\\
%\bottomrule
%\end{tabular}
%\caption{Results on ZINC12k. Best scores in \textbf{bold}, second-best \underline{underlined}.}
%\label{tab:zinc}
%\end{table*}
\section{Conclusion}
This work derives the MIMO graph convolution (MIMO-GC) using the convolution theorem and emphasizes the advantages of approximating the graph convolution in the MIMO case rather than the SISO case. A key property of the MIMO-GC is operating on multiple computational graphs or equivalently applying distinct linear transformations for each node pair. We have proven that the localized form is injective and results in linearly independent representations for almost every choice of edge weights. %combine various favorable properties
Due to our direct theoretical derivation from the MIMO-GC and the generality of the LMGC framework, studying properties of message-passing operations can now focus on analyzing the coefficients $\alpha_{(k)}^{(i,j)}$.  
This allows the development of more effective methods within a well-defined framework.  
%Limitations
While we have confirmed the advantages and potential of the LMGC framework, identifying optimal instantiations of LMGCs for specific tasks remains open.

\clearpage

\section*{Acknowledgments}
Part of this research has been funded by the Federal Ministry of Education and Research of Germany and the state of North-Rhine Westphalia as part of the Lamarr-Institute for Machine Learning and Artificial Intelligence and by the Federal Ministry of Education and Research of Germany under grant no. 01IS22094E WEST-AI. Simulations were performed with computing resources granted by WestAI under project rwth1631.
%% The file named.bst is a bibliography style file for BibTeX 0.99c
\bibliographystyle{named}
\bibliography{ijcai25}

\begin{thebibliography}{}

\bibitem[\protect\citeauthoryear{Bo \bgroup \em et al.\egroup }{2021}]{bo2021beyond}
Deyu Bo, Xiao Wang, Chuan Shi, and Huawei Shen.
\newblock Beyond low-frequency information in graph convolutional networks.
\newblock In {\em Thirty-Fifth {AAAI} Conference on Artificial Intelligence, Virtual Event, February 2-9}, pages 3950--3957. {AAAI} Press, 2021.

\bibitem[\protect\citeauthoryear{Bodnar \bgroup \em et al.\egroup }{2022}]{bodnar2022neural}
Cristian Bodnar, Francesco~Di Giovanni, Benjamin~Paul Chamberlain, Pietro Li{\'{o}}, and Michael~M. Bronstein.
\newblock Neural sheaf diffusion: {A} topological perspective on heterophily and oversmoothing in gnns.
\newblock In {\em Advances in Neural Information Processing Systems 35: Annual Conference on Neural Information Processing Systems 2022, New Orleans, LA, USA, November 28 - December 9}, 2022.

\bibitem[\protect\citeauthoryear{Brody \bgroup \em et al.\egroup }{2022}]{brody2022how}
Shaked Brody, Uri Alon, and Eran Yahav.
\newblock How attentive are graph attention networks?
\newblock In {\em The Tenth International Conference on Learning Representations, Virtual Event, April 25-29}, 2022.

\bibitem[\protect\citeauthoryear{Bruna \bgroup \em et al.\egroup }{2014}]{bruna2014spectral}
Joan Bruna, Wojciech Zaremba, Arthur Szlam, and Yann LeCun.
\newblock Spectral networks and locally connected networks on graphs.
\newblock In {\em 2nd International Conference on Learning Representations, Banff, AB, Canada, April 14-16, Conference Track Proceedings}, 2014.

\bibitem[\protect\citeauthoryear{Butler \bgroup \em et al.\egroup }{2023}]{butler2023convolutional}
Landon Butler, Alejandro Parada{-}Mayorga, and Alejandro Ribeiro.
\newblock Convolutional learning on multigraphs.
\newblock {\em {IEEE} Trans. Signal Process.}, 71:933--946, 2023.

\bibitem[\protect\citeauthoryear{Defferrard \bgroup \em et al.\egroup }{2016}]{defferrard2016convolutional}
Micha{\"{e}}l Defferrard, Xavier Bresson, and Pierre Vandergheynst.
\newblock Convolutional neural networks on graphs with fast localized spectral filtering.
\newblock In {\em Advances in Neural Information Processing Systems 29: Annual Conference on Neural Information Processing Systems 2016, December 5-10, Barcelona, Spain}, pages 3837--3845, 2016.

\bibitem[\protect\citeauthoryear{Dwivedi \bgroup \em et al.\egroup }{2022}]{dwivedi2022long}
Vijay~Prakash Dwivedi, Ladislav Ramp{\'{a}}sek, Michael Galkin, Ali Parviz, Guy Wolf, Anh~Tuan Luu, and Dominique Beaini.
\newblock Long range graph benchmark.
\newblock In {\em Advances in Neural Information Processing Systems 35: Annual Conference on Neural Information Processing Systems 2022, New Orleans, LA, USA, November 28 - December 9}, 2022.

\bibitem[\protect\citeauthoryear{Dwivedi \bgroup \em et al.\egroup }{2023}]{dwivedi2023benchmarking}
Vijay~Prakash Dwivedi, Chaitanya~K. Joshi, Anh~Tuan Luu, Thomas Laurent, Yoshua Bengio, and Xavier Bresson.
\newblock Benchmarking graph neural networks.
\newblock {\em J. Mach. Learn. Res.}, 24:43:1--43:48, 2023.

\bibitem[\protect\citeauthoryear{Eliasof \bgroup \em et al.\egroup }{2023}]{eliasof2023adrgnn}
Moshe Eliasof, Eldad Haber, and Eran Treister.
\newblock {ADR-GNN:} advection-diffusion-reaction graph neural networks.
\newblock {\em CoRR}, abs/2307.16092, 2023.

\bibitem[\protect\citeauthoryear{Erdős and R\'enyi}{1959}]{erdos1959on}
P~Erdős and A~R\'enyi.
\newblock On random graphs i.
\newblock {\em Publicationes Mathematicae Debrecen}, 6:290--297, 1959.

\bibitem[\protect\citeauthoryear{Gama \bgroup \em et al.\egroup }{2018}]{gama2018mimo}
Fernando Gama, Antonio~G. Marques, Alejandro Ribeiro, and Geert Leus.
\newblock {MIMO} graph filters for convolutional neural networks.
\newblock In {\em 19th {IEEE} International Workshop on Signal Processing Advances in Wireless Communications, Kalamata, Greece, June 25-28}, pages 1--5. {IEEE}, 2018.

\bibitem[\protect\citeauthoryear{Hammond \bgroup \em et al.\egroup }{2011}]{hammond2011wavelets}
David~K Hammond, Pierre Vandergheynst, and R{\'e}mi Gribonval.
\newblock Wavelets on graphs via spectral graph theory.
\newblock {\em Applied and Computational Harmonic Analysis}, 30(2):129--150, 2011.

\bibitem[\protect\citeauthoryear{Hansen and Gebhart}{2020}]{hansen2020sehaf}
Jakob Hansen and Thomas Gebhart.
\newblock Sheaf neural networks.
\newblock {\em CoRR}, abs/2012.06333, 2020.

\bibitem[\protect\citeauthoryear{He \bgroup \em et al.\egroup }{2016}]{he2016deep}
Kaiming He, Xiangyu Zhang, Shaoqing Ren, and Jian Sun.
\newblock Deep residual learning for image recognition.
\newblock In {\em 2016 {IEEE} Conference on Computer Vision and Pattern Recognition, Las Vegas, NV, USA, June 27-30}, pages 770--778. {IEEE} Computer Society, 2016.

\bibitem[\protect\citeauthoryear{Hu \bgroup \em et al.\egroup }{2021}]{hu2021ogblsc}
Weihua Hu, Matthias Fey, Hongyu Ren, Maho Nakata, Yuxiao Dong, and Jure Leskovec.
\newblock {OGB-LSC:} {A} large-scale challenge for machine learning on graphs.
\newblock In {\em Proceedings of the Neural Information Processing Systems Track on Datasets and Benchmarks 1, December 2021, virtual}, 2021.

\bibitem[\protect\citeauthoryear{Jin \bgroup \em et al.\egroup }{2022}]{jin2022feature}
Wei Jin, Xiaorui Liu, Yao Ma, Charu~C. Aggarwal, and Jiliang Tang.
\newblock Feature overcorrelation in deep graph neural networks: {A} new perspective.
\newblock In {\em {KDD} '22: The 28th {ACM} {SIGKDD} Conference on Knowledge Discovery and Data Mining, Washington, DC, USA, August 14 - 18}, pages 709--719. {ACM}, 2022.

\bibitem[\protect\citeauthoryear{Kipf and Welling}{2017}]{kipf2017semisupervised}
Thomas~N. Kipf and Max Welling.
\newblock Semi-supervised classification with graph convolutional networks.
\newblock In {\em 5th International Conference on Learning Representations, {ICLR} 2017, Toulon, France, April 24-26, 2017, Conference Track Proceedings}, 2017.

\bibitem[\protect\citeauthoryear{Kolda and Bader}{2009}]{kolda2009tensor}
Tamara~G. Kolda and Brett~W. Bader.
\newblock Tensor decompositions and applications.
\newblock {\em {SIAM} Rev.}, 51(3):455--500, 2009.

\bibitem[\protect\citeauthoryear{Kreuzer \bgroup \em et al.\egroup }{2021}]{kreuzer2021rethinking}
Devin Kreuzer, Dominique Beaini, William~L. Hamilton, Vincent L{\'{e}}tourneau, and Prudencio Tossou.
\newblock Rethinking graph transformers with spectral attention.
\newblock In {\em Advances in Neural Information Processing Systems 34: Annual Conference on Neural Information Processing Systems 2021, December 6-14, virtual}, pages 21618--21629, 2021.

\bibitem[\protect\citeauthoryear{Leman and Weisfeiler}{1968}]{leman1968reduction}
Andrei Leman and Boris Weisfeiler.
\newblock A reduction of a graph to a canonical form and an algebra arising during this reduction.
\newblock {\em Nauchno-Technicheskaya Informatsiya}, 2(9):12--16, 1968.

\bibitem[\protect\citeauthoryear{Levie \bgroup \em et al.\egroup }{2019}]{levis2019cayleynets}
Ron Levie, Federico Monti, Xavier Bresson, and Michael~M. Bronstein.
\newblock Cayleynets: Graph convolutional neural networks with complex rational spectral filters.
\newblock {\em {IEEE} Trans. Signal Process.}, 67(1):97--109, 2019.

\bibitem[\protect\citeauthoryear{Liu \bgroup \em et al.\egroup }{2025}]{liu2025gpnet}
Xun Liu, Alex~Hay{-}Man Ng, Fangyuan Lei, Yikuan Zhang, and Zhengming Li.
\newblock Gpnet: Simplifying graph neural networks via multi-channel geometric polynomials.
\newblock {\em Inf. Sci.}, 694:121696, 2025.

\bibitem[\protect\citeauthoryear{Luan \bgroup \em et al.\egroup }{2022}]{luan2022revisiting}
Sitao Luan, Chenqing Hua, Qincheng Lu, Jiaqi Zhu, Mingde Zhao, Shuyuan Zhang, Xiao{-}Wen Chang, and Doina Precup.
\newblock Revisiting heterophily for graph neural networks.
\newblock In {\em Advances in Neural Information Processing Systems 35: Annual Conference on Neural Information Processing Systems 2022, New Orleans, LA, USA, November 28 - December 9}, 2022.

\bibitem[\protect\citeauthoryear{Oono and Suzuki}{2020}]{Oono2020Graph}
Kenta Oono and Taiji Suzuki.
\newblock Graph neural networks exponentially lose expressive power for node classification.
\newblock In {\em 8th International Conference on Learning Representations Addis Ababa, Ethiopia, April 26-30}, 2020.

\bibitem[\protect\citeauthoryear{O’Neil}{1963}]{neil1963convolution}
Richard O’Neil.
\newblock {Convolution operators and $L(p,q)$ spaces}.
\newblock {\em Duke Mathematical Journal}, 30(1):129 -- 142, 1963.

\bibitem[\protect\citeauthoryear{Pei \bgroup \em et al.\egroup }{2020}]{pei2020geom}
Hongbin Pei, Bingzhe Wei, Kevin~Chen{-}Chuan Chang, Yu~Lei, and Bo~Yang.
\newblock Geom-gcn: Geometric graph convolutional networks.
\newblock In {\em 8th International Conference on Learning Representations, Addis Ababa, Ethiopia, April 26-30}, 2020.

\bibitem[\protect\citeauthoryear{Ramp{\'{a}}sek \bgroup \em et al.\egroup }{2022}]{rampasek2022recipe}
Ladislav Ramp{\'{a}}sek, Michael Galkin, Vijay~Prakash Dwivedi, Anh~Tuan Luu, Guy Wolf, and Dominique Beaini.
\newblock Recipe for a general, powerful, scalable graph transformer.
\newblock In {\em Advances in Neural Information Processing Systems 35: Annual Conference on Neural Information Processing Systems 2022, New Orleans, LA, USA, November 28 - December 9}, 2022.

\bibitem[\protect\citeauthoryear{Roth and Liebig}{2022}]{roth2022transforming}
Andreas Roth and Thomas Liebig.
\newblock Transforming pagerank into an infinite-depth graph neural network.
\newblock In {\em Machine Learning and Knowledge Discovery in Databases - European Conference, {ECML} {PKDD} 2022, Grenoble, France, September 19-23, 2022, Proceedings, Part {II}}, volume 13714, pages 469--484. Springer, 2022.

\bibitem[\protect\citeauthoryear{Roth and Liebig}{2023}]{roth2023rank}
Andreas Roth and Thomas Liebig.
\newblock Rank collapse causes over-smoothing and over-correlation in graph neural networks.
\newblock In {\em Learning on Graphs Conference, 27-30 November 2023, Virtual Event}, volume 231, page~35. {PMLR}, 2023.

\bibitem[\protect\citeauthoryear{Roth \bgroup \em et al.\egroup }{2024}]{roth2024preventing}
Andreas Roth, Franka Bause, Nils~M Kriege, and Thomas Liebig.
\newblock Preventing representational rank collapse in mpnns by splitting the computational graph.
\newblock {\em CoRR}, abs/2409.11504, 2024.

\bibitem[\protect\citeauthoryear{Roth}{2024}]{roth2024simplifying}
Andreas Roth.
\newblock Simplifying the theory on over-smoothing.
\newblock {\em CoRR}, abs/2407.11876, 2024.

\bibitem[\protect\citeauthoryear{Rusch \bgroup \em et al.\egroup }{2023}]{rusch2023gradient}
T.~Konstantin Rusch, Benjamin~Paul Chamberlain, Michael~W. Mahoney, Michael~M. Bronstein, and Siddhartha Mishra.
\newblock Gradient gating for deep multi-rate learning on graphs.
\newblock In {\em The Eleventh International Conference on Learning Representations, Kigali, Rwanda, May 1-5}, 2023.

\bibitem[\protect\citeauthoryear{Sandryhaila and Moura}{2013}]{sandryhaila2013discrete}
Aliaksei Sandryhaila and Jos{\'e}~MF Moura.
\newblock Discrete signal processing on graphs.
\newblock {\em IEEE transactions on signal processing}, 61(7):1644--1656, 2013.

\bibitem[\protect\citeauthoryear{Saratchandran \bgroup \em et al.\egroup }{2024}]{saratchandran2024rethinking}
Hemanth Saratchandran, Jianqiao Zheng, Yiping Ji, Wenbo Zhang, and Simon Lucey.
\newblock Rethinking softmax: Self-attention with polynomial activations.
\newblock {\em CoRR}, abs/2410.18613, 2024.

\bibitem[\protect\citeauthoryear{Sterling and Irwin}{2015}]{sterling2015zinc}
Teague Sterling and John~J. Irwin.
\newblock {ZINC} 15 - ligand discovery for everyone.
\newblock {\em J. Chem. Inf. Model.}, 55(11):2324--2337, 2015.

\bibitem[\protect\citeauthoryear{T{\"{o}}nshoff \bgroup \em et al.\egroup }{2024}]{toenshoff2024where}
Jan T{\"{o}}nshoff, Martin Ritzert, Eran Rosenbluth, and Martin Grohe.
\newblock Where did the gap go? reassessing the long-range graph benchmark.
\newblock {\em Trans. Mach. Learn. Res.}, 2024, 2024.

\bibitem[\protect\citeauthoryear{Vaswani \bgroup \em et al.\egroup }{2017}]{vaswani2017attention}
Ashish Vaswani, Noam Shazeer, Niki Parmar, Jakob Uszkoreit, Llion Jones, Aidan~N. Gomez, Lukasz Kaiser, and Illia Polosukhin.
\newblock Attention is all you need.
\newblock In {\em Advances in Neural Information Processing Systems 30: Annual Conference on Neural Information Processing Systems 2017, December 4-9, Long Beach, CA, {USA}}, pages 5998--6008, 2017.

\bibitem[\protect\citeauthoryear{Velickovic \bgroup \em et al.\egroup }{2018}]{veličković2018graph}
Petar Velickovic, Guillem Cucurull, Arantxa Casanova, Adriana Romero, Pietro Li{\`{o}}, and Yoshua Bengio.
\newblock Graph attention networks.
\newblock In {\em 6th International Conference on Learning Representations, Vancouver, BC, Canada, April 30 - May 3}, 2018.

\bibitem[\protect\citeauthoryear{Wortsman \bgroup \em et al.\egroup }{2023}]{wortsman2023replacing}
Mitchell Wortsman, Jaehoon Lee, Justin Gilmer, and Simon Kornblith.
\newblock Replacing softmax with relu in vision transformers.
\newblock {\em CoRR}, abs/2309.08586, 2023.

\bibitem[\protect\citeauthoryear{Xu \bgroup \em et al.\egroup }{2018}]{xu2018representation}
Keyulu Xu, Chengtao Li, Yonglong Tian, Tomohiro Sonobe, Ken{-}ichi Kawarabayashi, and Stefanie Jegelka.
\newblock Representation learning on graphs with jumping knowledge networks.
\newblock In {\em Proceedings of the 35th International Conference on Machine Learning, Stockholmsm{\"{a}}ssan, Stockholm, Sweden, July 10-15}, volume~80, pages 5449--5458. {PMLR}, 2018.

\bibitem[\protect\citeauthoryear{Xu \bgroup \em et al.\egroup }{2019}]{xu2018how}
Keyulu Xu, Weihua Hu, Jure Leskovec, and Stefanie Jegelka.
\newblock How powerful are graph neural networks?
\newblock In {\em 7th International Conference on Learning Representations, New Orleans, LA, USA, May 6-9}, 2019.

\bibitem[\protect\citeauthoryear{Yan \bgroup \em et al.\egroup }{2022}]{yan2022two}
Yujun Yan, Milad Hashemi, Kevin Swersky, Yaoqing Yang, and Danai Koutra.
\newblock Two sides of the same coin: Heterophily and oversmoothing in graph convolutional neural networks.
\newblock In {\em {IEEE} International Conference on Data Mining, Orlando, FL, USA, November 28 - Dec. 1}, pages 1287--1292. {IEEE}, 2022.

\bibitem[\protect\citeauthoryear{Zhou \bgroup \em et al.\egroup }{2020}]{zhou2020multi}
Kaixiong Zhou, Qingquan Song, Xiao Huang, Daochen Zha, Na~Zou, and Xia Hu.
\newblock Multi-channel graph neural networks.
\newblock In {\em Proceedings of the Twenty-Ninth International Joint Conference on Artificial Intelligence}, pages 1352--1358. ijcai.org, 2020.

\end{thebibliography}

\clearpage

\begin{figure*}[tb]
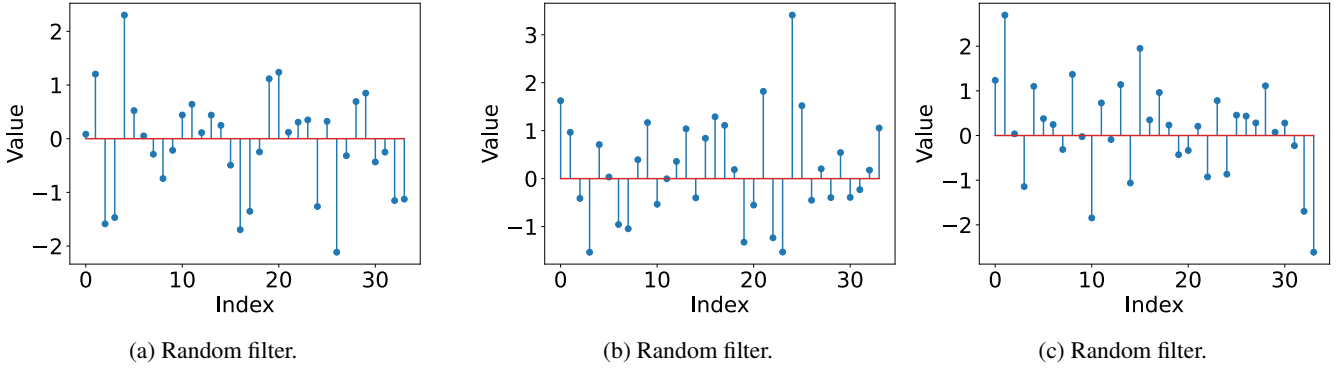

     \centering
     \begin{subfigure}[t]{0.32\textwidth}
         \centering
        \def\svgwidth{\textwidth}
     \import{figures/}{eigenvalues_randn1.pdf_tex}
     \caption{Random filter.}
     \end{subfigure}
     \hfill
     \begin{subfigure}[t]{0.32\textwidth}
         \centering
         \def\svgwidth{\textwidth}
         \import{figures/}{eigenvalues_randn2.pdf_tex}
         \caption{Random filter.}
    \end{subfigure}         
    \begin{subfigure}[t]{0.32\textwidth}
         \centering
         \def\svgwidth{\textwidth}
         \import{figures/}{eigenvalues_randn3.pdf_tex}
         \caption{Random filter.}
     \end{subfigure}
    \caption{Examples of random filters $F(\vtheta)$ in the graph Fourier domain.}
    \label{fig:conv_filters}
\end{figure*}

\begin{figure*}[tb]
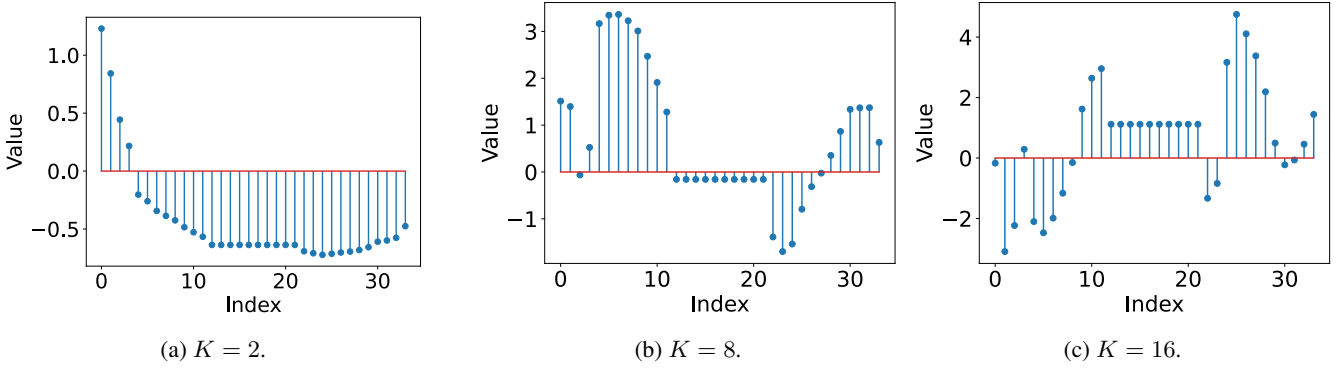

     \centering
     \begin{subfigure}[t]{0.32\textwidth}
         \centering
        \def\svgwidth{\textwidth}
     \import{figures/}{eigenvalues_chebyshev2.pdf_tex}
     \caption{$K=2$.}
     \end{subfigure}
     \hfill
     \begin{subfigure}[t]{0.32\textwidth}
         \centering
         \def\svgwidth{\textwidth}
         \import{figures/}{eigenvalues_chebyshev8.pdf_tex}
         \caption{$K=8$.}
    \end{subfigure}         
    \begin{subfigure}[t]{0.32\textwidth}
         \centering
         \def\svgwidth{\textwidth}
         \import{figures/}{eigenvalues_chebyshev16.pdf_tex}
         \caption{$K=16$.}
     \end{subfigure}
    \caption{Spectral filters based on Chebyshev polynomials $F(\vtheta) = \sum_{k=0}^K \evw_k T_k(\vlambda)$ of different degrees $K$.}
    \label{fig:poly_filters}
\end{figure*}

\begin{figure*}[tb]
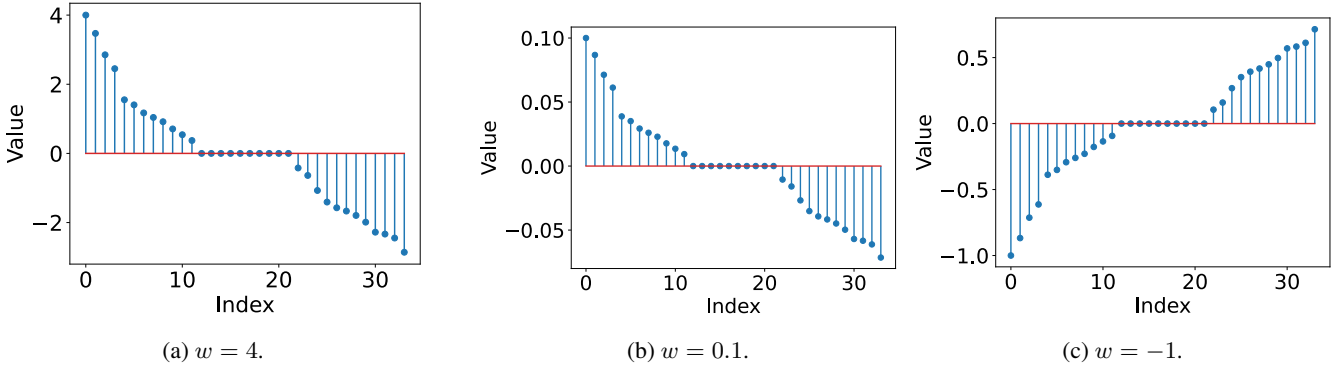

     \centering
     \begin{subfigure}[t]{0.32\textwidth}
         \centering
        \def\svgwidth{\textwidth}
     \import{figures/}{eigenvalues_gcn4.pdf_tex}
     \caption{$w=4$.}
     \end{subfigure}
     \hfill
     \begin{subfigure}[t]{0.32\textwidth}
         \centering
         \def\svgwidth{\textwidth}
         \import{figures/}{eigenvalues_gcn0.1.pdf_tex}
         \caption{$w=0.1$.}
    \end{subfigure}         
    \begin{subfigure}[t]{0.32\textwidth}
         \centering
         \def\svgwidth{\textwidth}
         \import{figures/}{eigenvalues_gcn-1.pdf_tex}
         \caption{$w=-1$.}
     \end{subfigure}
    \caption{Filters of the GCN for $F(\vtheta) = w\vlambda$ with different values for $w$.}
    \label{fig:gcn_filters}
\end{figure*}

\begin{figure*}[tb]
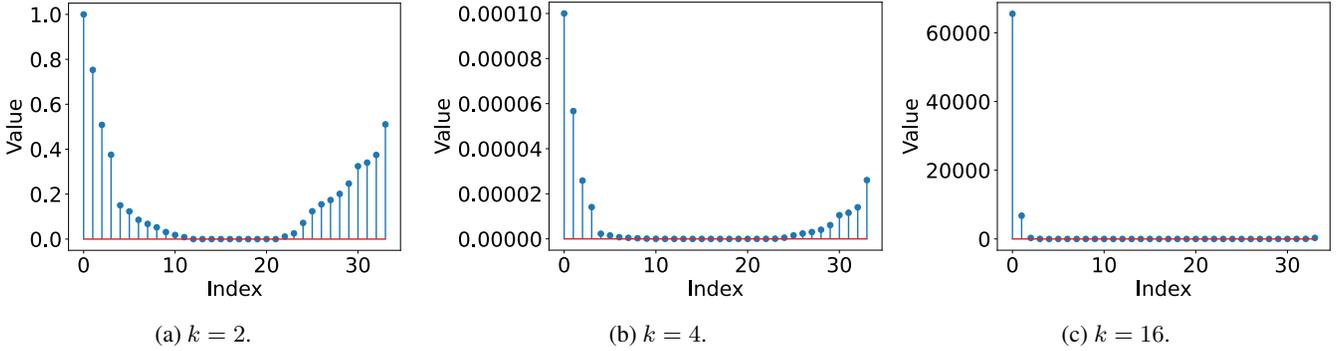

     \centering
     \begin{subfigure}[t]{0.305\textwidth}
         \centering
        \def\svgwidth{\textwidth}
     \import{figures/}{eigenvalues_gcn_pow2.pdf_tex}
     \caption{$k=2$.}
     \end{subfigure}
     \hfill
     \begin{subfigure}[t]{0.345\textwidth}
         \centering
         \def\svgwidth{\textwidth}
         \import{figures/}{eigenvalues_gcn_pow4.pdf_tex}
         \caption{$k=4$.}
    \end{subfigure}         
    \begin{subfigure}[t]{0.33\textwidth}
         \centering
         \def\svgwidth{\textwidth}
         \import{figures/}{eigenvalues_gcn_pow16.pdf_tex}
         \caption{$k=16$.}
     \end{subfigure}
    \caption{Combination of $k$ repetitions of random GCN filters $F(\vtheta) = w_k\vlambda \odot \dots \odot w_1\vlambda$ with random values for $w_1,\dots,w_k$.}
    \label{fig:gcn_depth}
\end{figure*}

\begin{table*}[tb]
    \centering
    \begin{tabular}{lcccccccc}
    \toprule
             Task & \multicolumn{2}{c}{Graph Regression} & \multicolumn{6}{c}{Node Classification} \\
        Dataset & ZINC & ZINC12k & Texas & Cornell & Wisconsin & Film & Chameleon & Squirrel\\
        \midrule

         \# of graphs & \num{249456} & \num{12000} & 1 & 1 & 1 & 1 & 1 & 1\\
         avg. \# of nodes & $\sim$ 23.2 & $\sim$ 23.2 & \num{183} & 183 & 251 & \num{7600} & \num{2277} & \num{5201}\\
         avg. \# of edges & $\sim$ 49.8 & $\sim$ 49.8 & 325 & 298 & 515 & \num{30019} & \num{36101} & \num{217073}\\
         avg. node degree & $\sim 2.1$ & $\sim 2.1$ & $\sim 1.8$ & $\sim 1.6$ & $\sim 2.1$ & $\sim 3.9$ & $\sim 15.9$ & $\sim 41.7$\\
         \# of node features & 1 & 1 & \num{1703} & \num{1703} & \num{1703} & 932 & \num{2325} & \num{2089} \\
         \# of classes & 1 & 1 & 5 & 5 & 5 & 5 & 5 & 5\\
         \bottomrule
    \end{tabular}
    \caption{Statistics of all utilized datasets.}
    \label{tab:statistics}
\end{table*}

\begin{table}[tb]
\centering
\begin{tabular}{lr}
\toprule
Method  & Learning rate \\
\midrule
GATv2 & $0.01$ \\
FAGCN & $0.01$ \\
ACM & $0.01$ \\
GIN & $0.01$\\
LMGC & $0.03$ \\
%\midrule
%MIMO-GC & $73\cdot10^{-9}\pm26\cdot10^{-11}$ & $826$\\
\bottomrule
\end{tabular}
\caption{Optimal learning rate for the results presented in Table 1.}
\label{tab:universality_hyper}
\end{table}

\begin{table*}[tb]
\centering
\begin{tabular}{lrrrrr}
\toprule
Method & Plain & + LapPE & + Jumping Knowledge & + Residual & + All three \\
\midrule
GATv2 & $8/0.001$ & $6/0.001$ & $10/0.001$ & $10/0.001$ & $10/0.001$ \\
FAGCN & $8/0.001$ & $8/0.001$ & $10/0.001$ & $10/0.001$ & $10/0.001$ \\
ACM & $8/0.001$ & $8/0.001$ & $10/0.001$ & $10/0.001$ & $10/0.001$ \\
GIN & $8/0.0003$ & $6/0.001$ & $8/0.0003$ & $10/0.0003$ & $10/0.001$\\
LMGC & $6/0.001$ & $8/0.001$ & $8/0.001$ & $10/0.001$ & $10/0.001$\\
\bottomrule
\end{tabular}
\caption{Optimal hyperparameters for the experiments presented in Table 3. Each entry describes the optimal 'number of layers / learning rate'.}
\label{tab:zinc12k_hyper}
\end{table*}

\begin{table*}[tb]
\centering
\begin{tabular}{lrrrrrr}
\toprule
Method & Texas & Cornell & Wisconsin & Film & Chameleon & Squirrel \\
\midrule
GATv2 & $0.01/0.25$ & $0.01/0.25$ & $0.01/0.25$ & $0.01/0.5$ & $0.01/0.25$ & $0.01/0.0$ \\
FAGCN & $0.003/0.5$ &  $0.01/0.25$ & $0.01/0.25$ & $0.001/0.5$ & $0.001/0.25$ & $0.01/0.25$ \\
ACM & $0.01/0.25$ & $0.01/0.25$ & $0.003/0.25$ & $0.01/0.5$ & $0.01/0.25$ & $0.01/0.25$ \\
GIN & $0.01/0.25$ & $0.01/0.25$ & $0.01/0.25$ & $0.01/0.5$ & $0.003/0.25$ & $0.01/0.25$ \\
LMCGC & $0.01/0.25$ & $0.003/0.25$ & $0.003/0.25$ & $0.001/0.5$ & $0.003/0.0$ & $0.001/0.5$ \\
\bottomrule
\end{tabular}
\caption{Optimal hyperparameters for the experiments presented in Table 4. Each entry describes the optimal 'learning rate / dropout rate'.}
\label{tab:hetero_hyper}
\end{table*}

\begin{table*}[t]
\centering
\begin{tabular}{lrrrrrr}
\toprule
& \multicolumn{3}{c}{ZINC} & \multicolumn{3}{c}{ZINC12k}  \\
 Method  & Train & Test & Time per Epoch (s)& Train & Test & Time per Epoch (s)\\
\midrule
GATv2 & $0.077\pm0.001$ & $0.114\pm0.004$ & $52.7\pm0.0$ & $\underline{0.005}\pm0.003$ & $0.377\pm0.024$ & $2.5\pm0.1$\\
FAGCN & $0.093\pm0.008$ & $0.130\pm0.003$ & $52.9\pm1.1$ & $0.016\pm0.006$ & $0.365\pm0.018$ & $2.5\pm0.0$\\
ACM & $0.109\pm0.003$& $0.128\pm0.001$ & $73.1\pm0.9$ & $0.019\pm0.005$ & $0.278\pm0.006$ & $3.4\pm0.0$\\
GIN & $\underline{0.068}\pm0.001$ & $\underline{0.088}\pm0.002$ & $\mathbf{35.5}\pm1.0$ & $0.018\pm0.009$ & $\underline{0.272}\pm0.009$ & $\mathbf{1.7}\pm0.0$ \\
LMGC & $\mathbf{0.054}\pm0.001$ & $\mathbf{0.080}\pm0.001$ & $\underline{47.5}\pm0.5$ & $\mathbf{0.002}\pm0.001$ & $\mathbf{0.241}\pm0.018$ & $\underline{2.4}\pm0.3$ \\
\bottomrule
\end{tabular}
\caption{MAE results on ZINC and ZINC12k. Optimal hyperparameters for train and test results are independently obtained. Best scores in \textbf{bold}, second-best \underline{underlined}. Each model uses at most \num{100000} parameters.}
\label{tab:zinc}
\end{table*}

\begin{table*}[t]
\centering
\begin{tabular}{lrrrr}
\toprule
& \multicolumn{2}{c}{ZINC} & \multicolumn{2}{c}{ZINC} \\
 Method  & Train & Test & Train & Test \\
\midrule
GATv2 & $10/0.001$ & $10/0.001$ & $8/0.001$ & $8/0.001$ \\
FAGCN & $8/0.001$ & $8/0.001$ & $6/0.001$ & $8/0.001$\\
ACM & $6/0.001$ & $6/0.001$ & $8/0.001$ & $8/0.001$ \\
GIN & $6/0.0003$ & $6/0.0003$ & $6/0.001$ & $8/0.0003$ \\
LMGC & $8/0.001$ & $8/0.001$ & $6/0.001$ & $6/0.001$ \\
\bottomrule
\end{tabular}
\caption{Optimal hyperparameters for the experiments presented in Table~\ref{tab:zinc}. Each entry describes the optimal 'number of layers / learning rate'.}
\label{tab:zinc_hyper}
\end{table*}

\appendix

\section{Further Details on Shared Component Amplification (SCA)}
We provide further details on shared component amplification, which we used throughout the main paper. Based on the graph Fourier transformed graph convolution $F(\vtheta * \vx) = F(\vtheta) \odot F(\vx)$, we visualize the spectral filter $F(\theta)$ for the general graph convolution, polynomial approximations, and the GCN as a first-order approximation.
In Figure~\ref{fig:conv_filters}, we visualize $F(\theta)$ for random filters $\theta$, as is allowed for the general graph convolution (Equation~\ref{eq:siso_gc}). 
Polynomial filters define $F(\theta)$ as a polynomial function of the eigenvalues of the normalized adjacency matrix or the corresponding graph Laplacian (Equation~\ref{eq:siso_poly}). We visualize random such filters based on Chebyshev polynomials of different degrees in Figure~\ref{fig:poly_filters} and visually observe this polynomial structure. 

The GCN as a first-order polynomial (Equation~\ref{eq:siso_mp}) can similarly be described as a spectral filter $F(\theta) = w\vlambda$ where $w\in\R$ is the sole parameter. We visualize this filter for different values $w$ in Figure~\ref{fig:gcn_filters}. As all filters $w\vlambda$ are equivalent up to scaling, we also observe the visual equivalence of such filters. For any parameter $w$, each component gets amplified in the same way. For any other fixed $\vlambda$, any parameter $w$ would still lead to the same amplification of components.
In the MIMO case, such a filter $w_{(i,j)}\vlambda$ is applied for each combination of input channel $i$ and output channel $j$. As for any $w_{(i,j)}$, the components are amplified in the same way based on the given $\vlambda$, we refer to this phenomenon as shared component amplification.

When such a filter is applied repeatedly for random GCN filters, the component corresponding to the maximal absolute value in $\vlambda$ dominates all other components exponentially. Repeated applications of GCN filters $F(\vtheta_K) \odot\dots\odot F(\vtheta_1)$ are visualized in Figure~\ref{fig:gcn_depth}. We refer to this as component dominance. In the MIMO case, when we have shared component amplification, the same component also dominates each combination of input and output channels with increasingly many repetitions.
This combination is equivalent to rank collapse and over-smoothing when a smooth component is amplified.

\section{Mathematical Details}
In this section, we provide the proofs for all statements in the main paper.
\subsection{Proofs for Section 3.}
\subsubsection{Proof of Theorem 1.}

\begin{proof}
We use the vectorized signal $\hat{\vx} = \mathrm{vec}(\mX)\in\mathbb{R}^{n\cdot c}$ by stacking its columns. The graph Fourier transform on matrices and tensors is applied along the node dimension, i.e., independently on each channel. For matrix $\mX$ this results in
\begin{equation*}
F(\mX) = \mU^T\mX\in\R^{n\times d}
\end{equation*}
and for tensor $\tW$ in 
\begin{equation*}
    \hat{\tW} = F(\tW) = \mU^T\times_1\tW\in\R^{n\times c \times d}
\end{equation*}
where $\times_1$ is the 1-mode tensor matrix product~\cite{kolda2009tensor} that performs the desired broadcasted matrix multiplication. With this, we state the multi-channel graph convolution in the Fourier domain as

\begin{equation*}
    \tW * \mX = \mU(\mU^T\times_1\tW \odot \mU^T\mX) = \mU(\hat{\tW} \odot \mU^T\mX)\, .
\end{equation*}
The element-wise product $\hat{\tW}_k(\mU^T\mX)_k$ is a matrix-vector product.
Similarly to the SISO-GC, we simplify this expression using matrix multiplications. Equivalently to the SISO-GC, this can be achieved by diagonalizing $\hat{\tW}$ into a block matrix of diagonal blocks
$$\mD = \begin{bmatrix}
    \hat{\etW}_{1,0,0} & 0 & 0 & & \hat{\etW}_{1,0,d} & 0 & 0 \\
    0 & \ddots & 0 & \dots & 0 & \ddots & 0 \\
    0 & 0 & \hat{\etW}_{n,0,0} & & 0 & 0 & \hat{\etW}_{n,0,d} \\
    & \vdots & & \ddots & & \vdots \\
    \hat{\etW}_{1,c,0} & 0 & 0 & & \hat{\etW}_{1,c,d} & 0 & 0 \\
    0 & \ddots & 0 & \dots & 0 & \ddots & 0 &  \\
    0 & 0 & \hat{\etW}_{n,c,0} & & 0 & 0 & \hat{\etW}_{n,c,d} \\
\end{bmatrix}$$
$\in\R^{nc\times nd}$
where $\hat{\etW}_{k,i,j}\in\R$. This simplifies the equivalent vectorized form into
\begin{equation*}
    \textrm{vec}(\hat{\tW} \odot \mU^T\mX) = \mD \textrm{vec}(\mU^T\mX) = \mD\left(\mI_d \otimes \mU^T\right)\textrm{vec}(\mX)
\end{equation*}
by utilizing the Kronecker product $\otimes$. The matrix $\mD$ can further be decomposed into a sum of Kronecker products $\mD = \sum_{k=1}^n\hat{\tW}_i\otimes\mI^{(k)}_n$, where for each $\mI^{(k)}_n\in\R^{n\times n}$ all entries are zero, apart from position $k,k$ which is one. This lets us state the full vectorized multi-channel graph convolution as
\begin{equation*}
\begin{split}
    \textrm{vec}(\tW*\mX) &= \left(\mI_n\otimes\mU\right)\left(\sum_{k=1}^n \hat{\tW}_k\otimes \mI_n^{(k)}\right)\left(\mI_n\otimes\mU^T\right)\textrm{vec}(\mX) \\
    &= \left(\sum_{k=1}^n\hat{\tW}_k\otimes \mU_{:,k}(\mU_{:,k})^T\right)\textrm{vec}(\mX)
    \end{split}
\end{equation*}
by using the fact that $\mU\mI_n^{(k)}\mU^T = \mU_{:,k}(\mU_{:,k})^T$. We define $\mW^{(k)} = \hat{\tW}_k^T\in\R^{d\times c}$. Inverting the $\textrm{vec}$ operation allows us to avoid the Kronecker product and state the exact multi-channel graph convolution as
\begin{equation*}
    \mW*\mX = \sum_{k=1}^n\mU_{:,k}(\mU_{:,k})^T\mX\mW^{(k)}\, .
\end{equation*}
This concludes the proof.
\end{proof}

\subsubsection{Proof of Proposition 1.}
\begin{proof}
    We decompose $\mX$ and $\mY$ as a sum of $n$ rank-one matrices based on $\mU$. We have 
    \begin{equation*}
    \mX = \sum_{k=1}^n \mU_{:,k}\va^{(k)}\in\R^{n\times d}
    \end{equation*}
    for $\va^{(k)} = (\mU_{:,k})^T\mX\in\mathbb{R}^{1\times d}$ and 
    \begin{equation*}
    \mY = \sum_{k=1}^n \mU_{:,k}\vb^{(k)}\in\R^{n\times c}    
    \end{equation*}
    for $\vb^{(k)} = (\mU_{:,k})^T\mY\in\R^{1\times c}$. Thus,
    \begin{equation}
    \begin{split}
        \tTheta * \mX 
        &= \sum_{k=1}^n \mU_{:,k}(\mU_{:,k})^T\mU_{:,k}\va^{(k)}\mW^{(k)}  \\
        &= \sum_{k=1}^n \mU_{:,k}\va^{(k)}\mW^{(k)} \\
        &= \sum_{k=1}^n \mU_{:,k}\vb^{(k)} = \mY
        \end{split}
    \end{equation}
    for $(\mW^{(k)})_{m,n} = \frac{\evb^{(k)}_n}{c\cdot\eva^{(k)}_m}$. By our assumption we have that $\eva^{(k)}_m \neq 0$ for all $k\in[n]$ and $m\in[d]$.
\end{proof}

\subsubsection{Proof of Proposition 2.}
\begin{proof}
We utilize the eigendecomposition 
\begin{equation*}
\mA^k = \sum_{j=1}^n \lambda_j^k\mU_{:,j}(\mU_{:,j})^T\, .    
\end{equation*}
We then reformulate the polynomial filter as
    \begin{equation}
    \begin{split}
    \sum_{k=0}^K A^k\mX\mV^{(k)} 
    &= \sum_{k=0}^K (\sum_{j=1}^n\lambda_i^k\mU_{:,j}(\mU_{:,j})^T)\mX\mV^{(k)} \\
    &= \sum_{j=1}^n \mU_{:,j}(\mU_{:,j})^T\mX\sum_{k=0}^K\lambda_i^k\mV^{(k)} \\
    &= \sum_{j=1}^n \mU_{:,j}(\mU_{:,j})^T\mX\mW^{(j)} \\
    &= \tTheta_{\textrm{poly}} * \mX
    \end{split}
    \end{equation}
    where $\mW_{(j)} = \sum_{k=0}^K \lambda_j^k\mV^{(k)}$ and the corresponding $\tTheta_{\textrm{poly}}$
\end{proof}

\subsection{Proofs for Section 4}
We now prove the injectivity and linear independence properties of the LMGC stated by Proposition~\ref{prop:injectivity} and Proposition~\ref{prop:lin_indepence}. We first state and prove a helpful lemma:

\begin{lemma}
\label{lemma:injectivity}
Let $\mathcal{X}$ be a countable set, $K\geq1$, and $\alpha_{(k)}^{(i,j)}\in\R$ a scalar for all $k\in[K],\vx_{(i)},\vx_{(j)}\in\mathcal{X}$. Let all $\alpha_{(k)}^{(i,j)}$ be chosen such that for any two finite multisets $\mathcal{X}_1, \mathcal{X}_2\subset \mathcal{X}$ and any $\vx_p,\vx_q\in \mathcal{X}$ with $\mathcal{X}_1\neq\mathcal{X}_2$ or $\vx_p\neq\vx_q$, there exists a $k\in [K]$ such that
$\sum_{\vx_{(j)}\in \mathcal{X}_1} \alpha_{(k)}^{(p,j)}\vx_{(j)} \neq \sum_{\vx_{(j)}\in \mathcal{X}_2} \alpha_{(k)}^{(q,j)}\vx_{(j)}$.

Then, $\sum_{\vx_{(j)}\in\mathcal{X}_p}\mW_{(i,j)}\vx_{(j)}$ is injective on finite multisets $\mathcal{X}_p\subset \mathcal{X}$ and elements $\vx_i\in\mathcal{X}$ for a.e. $\mW^{(1)},\dots,\mW^{(K)}\in\R^{d\times c}$.
\end{lemma}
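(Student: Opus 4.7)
The plan is to reduce the injectivity of $f(\vx_{(i)}, \gX_i) = \sum_{\vx_{(j)} \in \gX_i} \mW_{(i,j)}\vx_{(j)}$ to a linear algebra question in the parameters $\mW^{(1)}, \dots, \mW^{(K)}$, and then use that a proper linear subspace of $\R^{K \cdot d \cdot c}$ has Lebesgue measure zero, combined with countability of finite multisets drawn from $\gX$.

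First I would expand the sum using $\mW_{(i,j)} = \sum_{k \in [K]} \alpha_{(k)}^{(i,j)} \mW^{(k)}$ and interchange the order of summation to obtain
\begin{equation*}
f(\vx_{(i)}, \gX_i) = \sum_{k \in [K]} \mW^{(k)} \vs^{(k)}(i, \gX_i), \qquad \vs^{(k)}(i, \gX_i) := \sum_{\vx_{(j)} \in \gX_i} \alpha_{(k)}^{(i,j)} \vx_{(j)}.
\end{equation*}
Fix any two distinct pairs $(\vx_{(p)}, \gX_p) \neq (\vx_{(q)}, \gX_q)$, meaning $\vx_{(p)} \neq \vx_{(q)}$ or $\gX_p \neq \gX_q$. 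By the hypothesis of the lemma, there is some index $k^\star$ such that the difference vector $\vu^{(k^\star)} := \vs^{(k^\star)}(p, \gX_p) - \vs^{(k^\star)}(q, \gX_q) \in \R^d$ is nonzero. Injectivity at this pair fails iff $\sum_{k \in [K]} \mW^{(k)} \vu^{(k)} = \vzero$ where $\vu^{(k)} := \vs^{(k)}(p, \gX_p) - \vs^{(k)}(q, \gX_q)$.

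The next step is the measure-theoretic argument. Viewing the parameters as a single vector in $\R^{K \cdot d \cdot c}$, the equation $\sum_k \mW^{(k)} \vu^{(k)} = \vzero$ imposes $c$ linear constraints. For each output coordinate $r \in [c]$, the constraint reads $\sum_{k \in [K]} \sum_{\ell \in [d]} \emW^{(k)}_{r,\ell} \evu^{(k)}_\ell = 0$, and since at least one entry of the $\vu^{(k)}$'s is nonzero, this is a non-trivial linear equation in the free parameters $\{\emW^{(k)}_{r,\ell}\}_{k,\ell}$. Since the different rows $r$ involve disjoint sets of parameters, the bad set for this pair is a proper linear subspace of $\R^{K\cdot d\cdot c}$ and therefore has Lebesgue measure zero.

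Finally I would combine this with countability. Since $\gX$ is countable, the collection of pairs $((\vx_{(p)}, \gX_p), (\vx_{(q)}, \gX_q))$ of (element, finite multiset) is countable. Taking the countable union over all such pairs of the corresponding measure-zero bad sets still yields a measure-zero subset of the parameter space. Hence for almost every choice of $\mW^{(1)}, \dots, \mW^{(K)}$, injectivity holds simultaneously on all pairs, concluding the proof. I do not expect a real obstacle here; the only place to be careful is ensuring the constraint for each output row is genuinely non-trivial, which follows from picking the row that contains the nonzero entry of $\vu^{(k^\star)}$, and verifying that the finite multisets over a countable set really form a countable family.
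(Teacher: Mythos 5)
Your proposal is correct and follows essentially the same route as the paper: reduce to the vanishing of $\sum_{k}\mW^{(k)}\vu^{(k)}$ for the per-$k$ difference vectors, observe that this is a non-trivial linear condition on the parameters once some $\vu^{(k^\star)}\neq\vzero$ (which the hypothesis supplies), and conclude via a countable union of measure-zero sets. If anything, your version is slightly more careful than the paper's, which at one point phrases the requirement as if \emph{all} difference terms must be nonzero, whereas — as you correctly note — a single nonzero $\vu^{(k^\star)}$ already makes the bad set a proper subspace.
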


\begin{proof}
    We want to show that for any two distinct pairs $\vx_1\in\gX,\gX_1\subset\gX$ and $\vx_2\in\gX,\gX_2\subset\gX$, their difference expressions
    \begin{equation}
        \sum_{\vx_p\in \gX_1} \mW_{(1,p)}\vx_p - \sum_{x_q\in \gX_2}\mW_{(2,q)}\vx_q \neq \vzero
    \end{equation}
    is nonzero.
    Substituting the definitions of $\mW_{(1,p)}$ and $\mW_{(2,q)}$, we obtain:
    \begin{equation}
        \sum_{k=1}^K \mW^{(k)}(\sum_{\vx_p\in \gX_1} \alpha_{(k)}^{(1,p)}\vx_p - \sum_{\vx_q\in \gX_2} \alpha_{(k)}^{(2,q)}\vx_q) \neq \vzero\, .
    \end{equation}
    For a.e. $\mW^{(1)},\dots,\mW^{(K)}$, this is zero only when all differences are zero.
    As such, we require all terms to be zero:
    \begin{multline}
        \sum_{\vx_p\in \gX_1} \alpha_{(1)}^{(1,p)}\vx_p - \sum_{\vx_q\in \gX_2} \alpha_{(1)}^{(2,q)}\vx_q \neq 0 \\\land \dots \land \sum_{\vx_p\in \gX_1} \alpha_{(K)}^{(1,p)}\vx_p - \sum_{\vx_q\in \gX_2} \alpha_{(K)}^{(2,q)}\vx_q \neq 0\, .
    \end{multline}
     As $\gX$ is countable, a countable union of measure-zero sets has measure zero. This concludes the proof.
    
\end{proof}

\subsubsection{Proof of Proposition 3.}
\begin{proof}
Continuing with the proof of Lemma~\ref{lemma:injectivity}, we need to show that 
\begin{equation}
\label{eq:prop3_start}
    \sum_{\vx_p\in \gX_1} \alpha_{(k)}^{(1,p)}\vx_p - \sum_{\vx_q\in \gX_2} \alpha_{(k)}^{(2,q)}\vx_q \neq 0
\end{equation}
for some $k\in[K]$ for any $\vx_1,\vx_2\in\gX$ and $\gX_1,\gX_2\subset\gX$ with either $\vx_1\neq\vx_2$ or $\gX_1\neq\gX_2$. We can equivalently state Equation~\ref{eq:prop3_start} in matrix notation $\mX_s\valpha_k \neq 0$ where the matrix $$\mX_s = \begin{bmatrix}
            \vx_{p_1} & \dots & \vx_{p_{|\gX_1|}} & \vx_{q_1} & \dots & \vx_{q_{|\gX_2|}}
        \end{bmatrix}$$
        $\in\mathbb{R}^{d\times (|\gX_1|+|\gX_2|)}$
    is defined as the concatenation of all elements in $\vx_{p_1}, \dots, \vx_{p_{|\gX_1|}}\in\gX_1$ and $\vx_{q_1}, \dots, \vx_{q_{|\gX_2|}}\gX_2$, and the vector $\valpha_{(m)} = \begin{bmatrix}
        \alpha_{(k)}^{(1,p_1)} & \dots & \alpha_{(k)}^{(1,p_{|\gX_1|})} & - \alpha_{(k)}^{(1,q_1)} & \dots & - \alpha_{(k)}^{(1,q_{|\gX_2|})}
    \end{bmatrix}^T$ $\in\mathbb{R}^{|\gX_1|+|\gX_2|}$ for $k\in[K]$.

    $\mX_s\valpha_k = 0$ is true if and only if $\valpha_k\in \textrm{ker}(\mX_s)$. As $\textrm{ker}(\mX_s)$ spans a lower-dimensional subspace, and thus $\mX_s\valpha_k \neq 0$ is satisfied for a.e. choice of $\alpha_{(k)}^{(i,j)}$.
\end{proof}

\subsubsection{Proof of Proposition 4.}
\begin{proof}
Similar to the proofs for Lemma~\ref{lemma:injectivity} and Proposition~\ref{prop:injectivity} we need to show that for any element $\vx_1\vx_2\in\gX$ and multisets $\gX_1\gX_2\subset\gX$ with either $\vx_1 \neq \vx_2$ or $\gX_1 \neq b\cdot\gX_2$, we have
\begin{equation}
    \sum_{\vx_p\in\gX_1} \mW_{(1,p)}\vx_p - c\cdot\sum_{\vx_q\in\gX_2}\mW_{(2,q)}\vx_q\neq 0 
\end{equation}
for any $c\neq 0$ and a.e. $\alpha_{(k)}^{(i,j)}$ and a.e. $\mW^{(k)}$ for all $k\in[K]$. This is equivalent to 
\begin{equation}
    \sum_{k=1}^K \mW^{(k)} \left(\sum_{\vx_p\in\gX_1}\alpha_{(k)}^{(1,p)}\vx_p - c\cdot\sum_{\vx_q\in\gX_2}\alpha_{(k)}^{(2,q)}\vx_q\right) \neq 0
\end{equation}

We proceed with a proof by contradiction.
Assume that the representations are linearly dependent. Then, there exist a constant $c^{(1,2)}\neq 0$, such that
\begin{equation}
\label{eq:sum_lin_depend}
    \sum_{\vx_p\in \gX_1} \alpha_{(k)}^{(1,p)}\vx_p - c^{(1,2)}\cdot\sum_{\vx_q\in \gX_2} \alpha_{(k)}^{(2,q)}\vx_q = 0
\end{equation}
holds for all $k\in[K]$.
This equality can only be satisfied for a.e. choice of coefficients $\alpha_{(k)}^{(i,j)}$ when 
\begin{equation}
\sum_{\vx_p\in \gX_1} \vx_p - c_{k}^{(1,2)}\cdot\sum_{\vx_q\in \gX_2} \vx_q = 0
\end{equation}
for some $c_{k}^{(1,2)}\in\R$, i.e., the sum of the elements is linearly dependent. In this case, the value of $c^{(1,2)}$ is 
\begin{multline}
\label{eq:c_choice}
c^{(1,2)} = \frac{\|\sum_{\vx_p\in \gX_1} \alpha_{(k)}^{(1,p)}\vx_p\|}{\|\sum_{\vx_q\in \gX_2}\alpha_{(k)}^{(2,q)}\vx_q\|} \\ - \frac{(\|\sum_{\vx_p\in \gX_1} \vx_p\| - c_{k}^{(1,2)}\cdot\|\sum_{\vx_q\in \gX_2} \vx_q\|)}{\|\sum_{\vx_q\in \gX_2}\alpha_{(k)}^{(2,q)}\vx_q\|}\, .
\end{multline}
If $\gX_1 = b\cdot\gX_2$ for some scalar $b\neq0$, then $c_k^{(1,2)} = 1$ for all $k\in[K]$, and thus also $c^{(1,2)}=1$. As we assumed $\gX_1 \neq b\cdot\gX_2$, this case cannot occur. 
Under the assumption $\gX_1\neq b\cdot\gX_2$, Equation~\ref{eq:c_choice} can only be satisfied for a single value of $k$ simultaneously for a.e. $\alpha_{(k)}^{(i,j)}$. 
This contradicts the assumption of linear dependence. Thus, for $K>1$, representations are linearly independent.
\end{proof}

\section{Experimental Details}
All experiments were executed on an Nvidia H100 GPU with 96GB memory.

\subsection{Model Architectures}
\paragraph{Graph-Level Prediction}
We evaluate a model that first applies a linear feature encoder that maps the feature dimension onto a hidden dimension $d$. Then, $k$ iterations of message-passing are applied, each followed by a GELU activation. For cases with residual connections, the input to each message-passing layer is added to the output of the GELU activation. This output is stored for all message-passing steps for jumping knowledge. After message-passing, the channel-wise maximum value per node is used when jumping knowledge is allowed. Otherwise, the output of the last layer is used. As a decoder, we use a two-layer MLP with a linear layer followed by a GELU activation and another linear layer.

\paragraph{Node Classification}
The employed model first applies a linear encoder, ReLU activation, and a dropout layer. Then, $k$ layers of message-passing with a residual connection are applied, each followed by a ReLU activation and a dropout layer. As the last operation, a single linear layer is applied. The number of channels for all layers is set to a shared $d$, apart from the initial and final dimensions. For these experiments, $k$ is set to $2$. The number of channels is reduced so that the total number of parameters is maximal but below \num{100000}.

\subsection{Hyperparameters}
For the experiment in Section 6.2, we optimize the learning rate in $\{0.03,0.01,0.003\}$ using a grid search. We repeat each experiment with three fixed seeds and report the average minimal optimization error. Optimal hyperparameters are shown in Table~\ref{tab:universality_hyper}.

For the experiments in Section 6.3, we optimize the learning rate in $\{0.001,0.0003,0.0001\}$ and the number of layers in $\{6,8,10\}$ using a grid search. Each experiment is repeated for four fixed seeds. Average train and test loss are reported. Optimal hyperparameters are shown in Table~\ref{tab:zinc_hyper} and Table~\ref{tab:zinc12k_hyper}.

For the experiments in Section 6.4, we optimize the learning rate in $\{0.01,0.003,0.001\}$ and the dropout ratio in $\{0.0,0.25,0.5\}$ using a grid search. Each experiment is run for ten splits into train, validation and test sets. Based on the best accuracy on the validation set, we reuse the optimal hyperparameters and run all ten splits for five times and report average test scores. Optimal hyperparameters are shown in Table~\ref{tab:hetero_hyper}.

\subsection{Additional Results}
We provide additional results on the full ZINC dataset that contains around $\num{250000}$ molecular graphs, training errors, and runtimes.
Average train and test errors are presented in Table~\ref{tab:zinc}. 
Similarly to the experiment on universality, the LMGC achieves the lowest training error. The difference is more significant on the ZINC12k subset as it is easier to overfit on less data. This improvement also leads to the LMGC having the lowest test loss for ZINC and ZINC12k. Compared to GATv2 and FAGCN, the LMGC improves the test results by at least $40\%$. This confirms the ability of the LMGC to match the expressivity of GIN. 
The training time per epoch of the LMGC is increased compared to GIN by around $35\%$ but slightly reduced compared to GATv2 and FAGCN as the LMGC does not require the normalization step.

\end{document}